%%
%% Copyright 2007-2018 Elsevier Ltd
%%
%% This file is part of the 'Elsarticle Bundle'.
%% ---------------------------------------------
%%
%% It may be distributed under the conditions of the LaTeX Project Public
%% License, either version 1.2 of this license or (at your option) any
%% later version.  The latest version of this license is in
%%    http://www.latex-project.org/lppl.txt
%% and version 1.2 or later is part of all distributions of LaTeX
%% version 1999/12/01 or later.
%%
%% The list of all files belonging to the 'Elsarticle Bundle' is
%% given in the file `manifest.txt'.
%%

%% Template article for Elsevier's document class `elsarticle'
%% with numbered style bibliographic references
%% SP 2008/03/01
%%
%%
%%
%% $Id: elsarticle-template-num.tex 64 2013-05-15 12:23:51Z rishi $
%%
%%
%% \documentclass[preprint,12pt]{elsarticle}

%% Use the option review to obtain double line spacing
%%\documentclass[preprint,review,12pt]{elsarticle}
\documentclass[review]{elsarticle}
%% Use the options 1p,twocolumn; 3p; 3p,twocolumn; 5p; or 5p,twocolumn
%% for a journal layout:
%% \documentclass[final,1p,times]{elsarticle}
%% \documentclass[final,1p,times,twocolumn]{elsarticle}
%% \documentclass[final,3p,times]{elsarticle}
%% \documentclass[final,3p,times,twocolumn]{elsarticle}
%% \documentclass[final,5p,times]{elsarticle}
%% \documentclass[final,5p,times,twocolumn]{elsarticle}

%% For including figures, graphicx.sty has been loaded in
%% elsarticle.cls. If you prefer to use the old commands
%% please give \usepackage{epsfig}

%% The amssymb package provides various useful mathematical symbols
\usepackage{amssymb}
%% The amsthm package provides extended theorem environments
%% \usepackage{amsthm}

%% The lineno packages adds line numbers. Start line numbering with
%% \begin{linenumbers}, end it with \end{linenumbers}. Or switch it on
%% for the whole article with \linenumbers.
\usepackage{lineno}
\usepackage{graphicx}
\usepackage{epstopdf}
\usepackage{subfigure}
\usepackage{amsmath}
\usepackage{multirow}
\usepackage[bookmarks=true]{hyperref}
\usepackage{algorithm}
\usepackage{algorithmicx, algpseudocode}
\usepackage{picins}
\newtheorem{theorem}{Theorem}
%\newtheorem{lem}[thm]{Lemma}
%\newdefinition{rmk}{Remark}
\newproof{proof}{Proof}
\newproof{pot}{Proof of Theorem \ref{thm2}}

\def \myalg {SemFS}
\def \myalgv {SemFS/c}
%\declaretheorem{theorem}
%[section]
%[section]
\newtheorem{proposition}[theorem]{Proposition}%[section]

%\makeatletter %
%\renewcommand\subsubsection{\@startsection{paragraph}{4}{\z@}{3\p@ \@plus \p@}{-5\p@}{\normalsize\bfseries}}
\makeatletter %
\newcommand\mysubsection{\@startsection{paragraph}{4}{\z@}{3\p@ \@plus \p@}{-5\p@}{\normalsize\bfseries}}

%\journal{Cognitive Systems Research}
\journal{arXiv}
\begin{document}

\begin{frontmatter}

%% Title, authors and addresses

%% use the tnoteref command within \title for footnotes;
%% use the tnotetext command for theassociated footnote;
%% use the fnref command within \author or \address for footnotes;
%% use the fntext command for theassociated footnote;
%% use the corref command within \author for corresponding author footnotes;
%% use the cortext command for theassociated footnote;
%% use the ead command for the email address,
%% and the form \ead[url] for the home page:
%% \title{Title\tnoteref{label1}}
%% \tnotetext[label1]{}
%% \author{Name\corref{cor1}\fnref{label2}}
%% \ead{email address}
%% \ead[url]{home page}
%% \fntext[label2]{}
%% \cortext[cor1]{}
%% \address{Address\fnref{label3}}
%% \fntext[label3]{}

\title{Zero-shot  Feature  Selection  via  Transferring Supervised Knowledge}

%% use optional labels to link authors explicitly to addresses:
%% \author[label1,label2]{}
%% \address[label1]{}
%% \address[label2]{}

%\author{}
%
%\address{}

\author[mymainaddress]{Zheng Wang}
\ead{wangzheng@ustb.edu.cn}
%\author[mymainaddress]{Chaokun Wang\corref{mycorrespondingauthor}}
%\cortext[mycorrespondingauthor]{Corresponding author}
%\ead{chaokun@tsinghua.edu.cn}
\author[mysecondaryaddress]{Qiao Wang}
\author[mymainaddress]{Tingzhang Zhao}
%\ead{wangzheng@ustb.edu.cn}

\author[mysecondaryaddress]{Xiaojun Ye}
%\author[mysecondaryaddress]{Global Customer Service\corref{mycorrespondingauthor}}
%\cortext[mycorrespondingauthor]{Corresponding author}
%\ead{support@elsevier.com}

\address[mymainaddress]{Department of Computer Science, University of Science and Technology Beijing, Beijing 100083, P.R. China}
\address[mysecondaryaddress]{School of Software, Tsinghua University, Beijing 100084, P.R. China}

\begin{abstract}
Feature selection, an effective technique for dimensionality reduction, plays an important role in many machine learning systems. Supervised knowledge can significantly improve the performance.
However, faced with the rapid growth of newly emerging concepts, existing supervised methods might easily suffer from the scarcity and validity of labeled data for training.
In this paper, the authors study the problem of zero-shot feature selection (i.e., building a feature selection model that generalizes well to “unseen” concepts with limited training data of “seen” concepts).
Specifically, they adopt class-semantic descriptions (i.e., attributes) as supervision for feature selection, so as to utilize the supervised knowledge transferred from the seen concepts.
For more reliable discriminative features, they further propose the center-characteristic loss which encourages the selected features to capture the central characteristics of seen concepts.
Extensive experiments conducted on various real-world datasets demonstrate the effectiveness of the method.
\end{abstract} 

\begin{keyword}
Feature selection \sep pattern recognition
\end{keyword}

\end{frontmatter}

%\linenumbers

%% main text

\section{Introduction}
%In many real-world applications such as data mining and image processing, data are represented by high dimensional feature vectors.
%%Due to large amounts of data produced by rapid development of technology, the processing of high-dimensional data has become a big problem in many fields, such as computer vision, data mining, and pattern recognition.
%High dimensional data often contains quite a lot noise features, which are detrimental to practical applications.
%%As one of the typical method to alleviate this problem, feature selection attracts more and more attentions.
%%It aims to finding a subset of the most representative features from the original high-dimensional feature space.
%Feature selection~\cite{guyon2003introduction}, which aims to find a subset of the most representative features, has been proved to be one of the most important solutions to this problem.
The problem of feature selection~\cite{guyon2003introduction}~\cite{lin2018sharing} has been widely investigated due to its importance for pattern recognition and image processing systems.
%saeys2007review
This problem can be formulated as follows: identify an optimal feature subset which provides the best tradeoff between its size and relevance for a given task.
%The identified features not only provide an effective solution for the task, but also provide a dimensionally reduced base for understanding the underlying data~\cite{bicciato2003pca}.
The identified features not only provide an effective solution for the task, but also provide a dimensionally-reduced view of the underlying data~\cite{bicciato2003pca}.

\begin{figure}[!t]
\centering
    \includegraphics[width=0.6\textwidth]{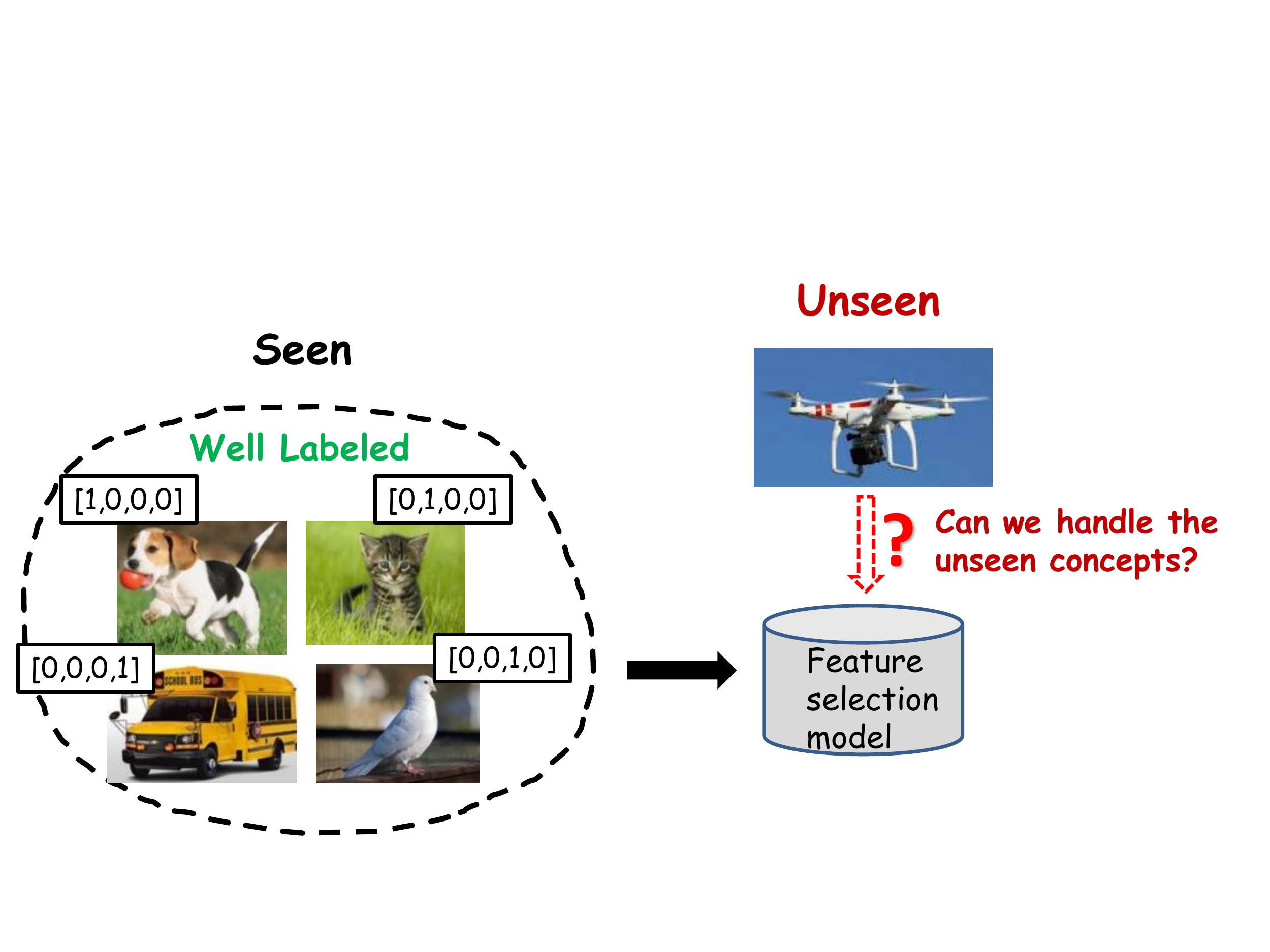}
\caption{
%Illustration of Zero-Shot Feature Selection Problem.}
%An example of newly-emerging concepts to the existing learning systems.
%The Zero-Shot Feature Selection Problem. The feature selection model trained with seen concepts should generalize well to unseen concepts.
%Illustration of the Zero-Shot Feature Selection Problem.
The Zero-Shot Feature Selection Problem.
%, i.e., building the feature selection model that generalizes well to unseen concepts with limited training data of seen concepts.
}
\label{fig_problem}
\end{figure}
Supervised knowledge (e.g., labels or pair-wise relationships) associated to data is capable of significantly improving the performance of feature selection methods~\cite{chandrashekar2014survey}.
However, it should be noted that existing supervised feature selection methods are facing an enormous challenge --- the generation of reliable supervised knowledge cannot catch up with the rapid growth of newly-emerging concepts and multimedia data.
%In other words, as illustrated in Fig.~\ref{fig_problem}, it is difficult to collect labeled instances for all the existing and coming categories XXXX.
In practice, it is costly to annotate sufficient training data for the new concepts timely, and meanwhile, impractical to retrain the feature selection model whenever a new concept emerges.
%whereas the retrieval system meets a new concept. wasteful/expensive/uneconomical
As illustrated in Fig.~\ref{fig_problem}, traditional methods perform well on the seen concepts which have correct guidance, but they may easily fail on the unseen concepts which have never been observed, like the newly invented product ``quadrotor''.
% they nev-er meet before such as the “dicycle” which is a kind of vehicle
%In other words, as illustrated in Fig.~\ref{fig_problem}, due to the rapid growth of newly-emerging concepts and multimedia data, it is impossible to obtain labeled samples for all classes, such as the new invented product ``quadrotor''.
%rapid increasing speed of newly-emerging semantic concepts and multimedia data.
%We call this problem as Zero-Shot Feature Selection Problem.
%We address the feature selection task in zero-shot setting: the test data are coming from unseen classes.
% problem of zero-shot event recognition in consumer videos.
%This challenge motivates us to consider: whether we can design a feature selection strategy to handle the multimedia data coming from unseen categories.
%This challenge motivates us to consider: whether we can design a feature selection strategy to handle the coming unseen categories.
%The key is to transfer the supervised knowledge from the ``seen'' categories to the ``unseen'' categories.
%However, xxxxxxx .
Therefore, the problem of \emph{Zero-Shot Feature Selection (ZSFS)}, i.e., building a feature selection model that generalizes well to unseen concepts with limited training data of seen concepts, deserves great attention.
%is very important for real-world applications.
%building generalizable feature selection model that can handle unseen concepts
%, i.e., identifying key features for the coming unseen concepts with limited training data of seen concepts, is very important for real-world applications.
%Therefore, building generalizable feature selection model which can identify the key features for unseen concepts is very important for real-world applications.
%However, existing works pay little attention to this problem.
However, few studies have considered this problem.
%However, little attention has been paid to this problem.
\begin{figure}[!t]
\centering
    \includegraphics[width=0.8\textwidth]{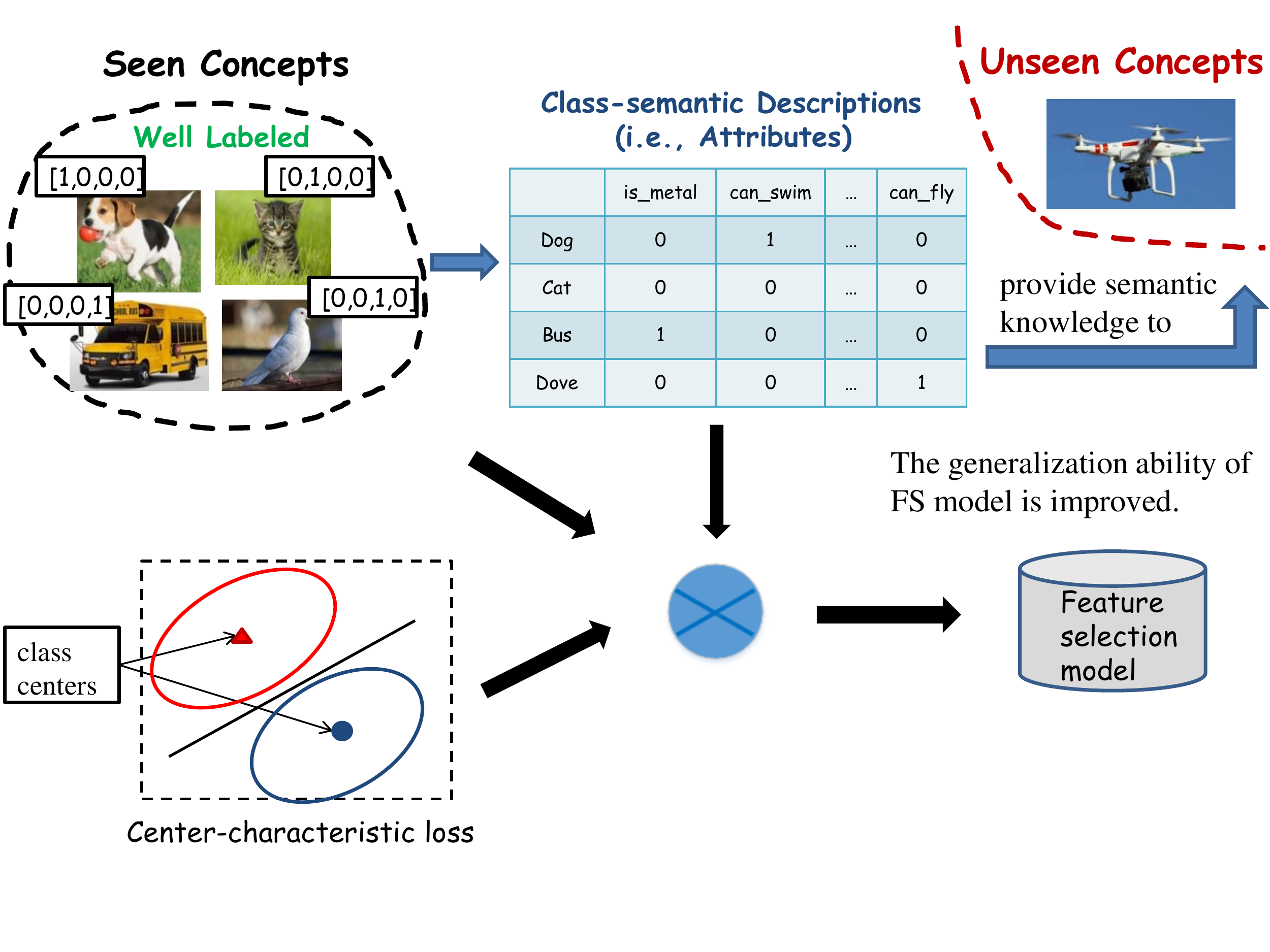}
%\caption{Overview of the proposed method for the ZSFS problem.}
\caption{Overview of the proposed method.}
\label{fig_method}
\end{figure}

The major challenge in the ZSFS problem is how to deduce the knowledge of unseen concepts from seen concepts.
In fact, the primary reason why existing studies fail to handle unseen concepts is that they only consider the discrimination among seen concepts (like the 0/1-form class labels illustrated in Fig.~\ref{fig_problem}), such that little knowledge could be deduced for unseen concepts.
To address this, as illustrated in Fig.~\ref{fig_method}, we adopt the class-semantic descriptions (i.e., attributes) as supervision for feature selection.
This idea is inspired by the recent development of Zero-Shot Learning (ZSL)~\cite{farhadi2009describing}~\cite{akata2016multi}~\cite{ji2019manifold} which has demonstrated that the capacity of inferring attributes allows us to describe, compare, or even categorize unseen objects.

An attendant problem is how to identify reliable discriminative features with attributes which might be inaccurate and noisy~\cite{jayaraman2014zero}.
To alleviate this, we further propose a novel loss function (named \emph{center-characteristic loss}) which encourages the selected features to capture the central characteristics of seen concepts.
Theoretically, this loss function is a variant of the \emph{center loss}~\cite{wen2016discriminative} which has shown its effectiveness to learn discriminative and generalized features for categorizing unseen objects.

We evaluate the performance of the proposed method on several real-world datasets, including SUN, aPY and CIFAR10.
One point should be noted is that the attributes of CIFAR10 are automatically generated from a public Wikipedia text-corpus~\cite{shaoul2010westbury} by a well-known NLP tool~\cite{huang2012improving}.
%The experimental results show that
The experimental evidence shows that no matter with manually or automatically generated attributes, our method generalizes well to unseen concepts.
We summarize our main contributions as follows:
%Our main contributions are summarized as follows:
\begin{itemize}
  \item
  %We study the problem of employing training data of seen categories to select discriminative features for unseen categories.
  %We study the problem of selecting features for unseen categories with limited training data of seen categories.
  %We study the problem of feature selection towards the images of unseen classes with limited training data of seen classes.
  We study the problem of ZSFS, i.e., building a feature selection model which generalizes well to unseen concepts with limited training data of seen concepts.
  %, i.e., identifying key features for the coming unseen concepts with limited training data of seen concepts.
  To our best knowledge, little work has addressed this problem.
  %this is the first attempt to study the problem of selecting features for the unseen categories with limited seen supervised knowledge.
  \item We propose an efficient strategy to reuse the supervised knowledge of seen concepts.
    Concretely, feature selection is guided by the seen concept attributes which provide discriminative information about unseen concepts.
%  Concretely, we use semantic attribute labels of seen classes to guide feature selection.
%  Since these attributes provide discriminative information about unseen classes, the selected features should also has this discriminative power.
%  we project the original binary label vectors of seen data to semantic attribute space where unseen categories can be distinguished by these attributes.
%  %Then, the features which could best generate these semantic representations are selected for the coming unseen classes.
%  %, and select the features best xxx semantic meaning.
%  Guided by these semantic knowledge, selected features have discriminant power to distinguish unseen classes.
  %in which semantic correlations among labels unseen categories xxxxx.
%  transfer the supervised knowledge from seen categories to unseen categories.
%   Specifically, we projecting the original label information to semantic embedding space, in which the unseen categories xxxxx.
  \item To select more reliable discriminative features, we further encourage the selected features to capture the central characteristics of seen concepts.
  %We further formulate this purpose by a novel loss function named center-characteristic loss.
  We formulate this by a novel loss function named center-characteristic loss.
%  This intuition is captured by a new proposed loss function named center-characteristic loss.
%  propose another new loss function (called center-characteristic Loss).
%  This is the first attempt to use such a loss function enhance the generalizing power of selected features for the coming unseen classes.
  %To be best of our knowledge, this is the first attempt to use such a loss function to select discriminative features for unseen categories.
  \item
%  In order to generate more reliable discriminative features, we propose a new loss function (called self-center loss) to minimize the intra-class distances of the deep features.
  We conduct extensive experiments on three benchmark datasets to demonstrate the effectiveness of our method.
  %the proposed method.
  %We conduct extensive experiments on three benchmark datasets. The experimental results indicate that our algorithm consistently outperforms other compared algorithms on all the databases.
\end{itemize}

The rest of this paper is organized as follows. In Section~\ref{sect_related}, we briefly review some related work in feature selection and zero-shot learning.
In Section~\ref{section_problem_statement}, we formally define the problem studied in this paper.
%In Section~\ref{sect_method}, we will elaborate our approach with details., together with our optimization method and an analysis of the method.
In Section~\ref{sect_method}, we elaborate our approach with details, together with the optimization method.
In Section~\ref{sect_discuss}, more analysis of the method is provided.
Extensive experiments on several different datasets will be reported in Section~\ref{sect_expriment}, followed by the conclusion in Section~\ref{section_conclusion}.

\section{Related Work}\label{sect_related}
\subsection{Feature Selection}
%Feature selection, which selects a proper subset of original features, plays an important role in the pattern recognition and machine learning areas.
Feature selection, which selects a subset of the original features according to some criteria, plays an important role in pattern recognition and machine learning systems.
%In terms of the label availability, existing feature selection methods can be roughly classified into two groups: unsupervised methods and supervised methods.
In terms of the label availability, existing feature selection methods can be roughly classified into three groups: unsupervised, supervised and semi-supervised feature selection methods.
%In unsupervised scenario, unsupervised methods~\cite{he2005laplacian,wang2014unsupervised}, tend to select the features best preserving the data similarity or manifold structure derived from the whole feature space.
In the unlabeled case, unsupervised methods select features which best keep the intrinsic structure of data according to various criteria, such as data variance~\cite{dash2000feature}, data similarity~\cite{nie2008trace,chu2018bidirectional} and data separability~\cite{cai2010unsupervised}.
%When no labeled data is available,
%In terms of the label availability, feature selection methods can be grouped into unsupervised methods or supervised methods.
%Unsupervised methods~\cite{he2005laplacian}~\cite{wang2014unsupervised}, which do not need any label information, tend to select the features best preserving the data similarity or manifold structure derived from the whole feature space.
%Supervised methods~\cite{cai2013exact,liu2013efficient} always assume that the whole dataset contains a fully labeled part and an unlabeled part.
%To take the advantage of labeled data, supervised methods~\cite{lee1988thirteen,cai2013exact} evaluate features by their correlation with the given class labels.
To take advantage of labeled data, supervised methods~\cite{lee1988thirteen,cai2013exact,zhao2018trace} evaluate features by their relevance to the given class labels.
%Supervised methods~\cite{lee1988thirteen,cai2013exact} utilize the labeled data, and evaluate features by their correlation with the given class labels.
%assume some of the data must be labeled,
%and evaluate features by their correlation with the given class labels.
%assume that some data are labeled and others are unlabeled.
%The importance of a feature is then evaluated by its correlation with the given class labels.
%The higher correlation indicates a more important feature.
%The typical supervised methods include Pearson correlation coefficient~\cite{lee1988thirteen} and Fisher score~\cite{duda1995pattern}.
Extended from unsupervised and supervised methods, semi-supervised methods~\cite{kong2010semi,xu2010discriminative,zeng2016semi} utilize both labeled and unlabeled data to mine the feature relevance.

It should be noted that traditional feature selection methods do not consider the generalization to ``unseen'' concepts, limiting in the ``seen'' area where every concept should at least provide one (unlabeled or labeled) instance.
%where every category should at least provide one training image.
In light of the knowledge explosion, with the rapid growth of newly-emerging concepts and multimedia data, building a feature selection model that generalizes well to unseen concepts has practical importance.%is very important for real-world applications.
%a feature selection method which can perform well on unseen data is strongly needed.

\subsection{Attribute-based Zero Shot Learning}
The task of Zero Shot Learning (ZSL)~\cite{lampert2009learning,farhadi2009describing,akata2016multi} is to recognize unseen objects without any label information.
To achieve this goal, it leverages an intermediate semantic level (i.e., attribute layer) which is shared in both seen and unseen concepts.
%Taking the seen concepts in Fig.~\ref{fig_method} as an example, possible attributes include ``has stripes'', ``can fly'' and ``is metal''.
%In the training stage, the knowledge about these attributes (of seen concepts) is captured, and in the inference stage this knowledge is used to categorize instances among a set of unseen concepts.
%In the training stage, we could learn a mapping function between image features and these attributes of seen concepts.
%Then, we can
%In the inference stage, we can infer the same attributes of unseen
%the same attributes of the objects belonging to unseen concepts such as ``koala'' and ``quadrotor'', and then
%Finally, by computing the similarity between the test
%image’s attributes with each target class’ attributes
%Finally, the recognition result is obtained by comparing the recognized attributes of images with the class attributes.
Take the seen concepts in Fig.~\ref{fig_method} as an example.
%We can define some attributes like ``has stripes'', ``can fly'' and ``is metal''.
We can define some attributes like ``can swim'', ``can fly'' and ``is metal''.
Then, we can train attribute recognizers using images and attribute information from seen concepts.
After that, given an image belonging to unseen concepts, these attribute recognizers can infer the attributes of this image.
Finally, the recognition result is obtained by comparing the test image's attributes with each unseen concept's attributes.

Although various ZSL methods have been proposed recently, they are mainly limited to classification or prediction applications.
%However, ZSL methods are mainly limited to classification or prediction applications.
To our best knowledge, this is the first study to consider the zero-shot setting in the feature selection problem.
%identify key features for novel images that were not observed.
%Our ZSFS enriches the family of zero-shot learning.
%In particular, we instead directly model the labels and are able to select useful features for unseen classes.

%Existing ZSL work mainly focus on the object recongigiation task.
%However, how to utilize the knowledge transferred from totally different classes to enhance feature selection is still an unexplored research issue. 

%%\input{preliminary}

\section{Problem Definition and Notations}\label{section_problem_statement}
%In this paper, we consider the feature selection for the images belonging to the unseen categories.
%Specifically, the problem is described as follows.
%In this part, we define the problem and important notations.
Suppose there is a set of instances $X{=}\{x_{1},...,x_{n}\}'\in \mathbb{R}^{n \times d}$ belonging to a seen concept set $\mathcal{C}$, where $x_{i}\in \mathbb{R}^{d}$ is the feature vector and $n$ is the instance number.
%Denote the binary label indicator matrix $Y{=}\{y_{1},...,y_{n}\}' \in \{0,1\}^{n \times c}$, where $y_{i} \in \{0,1\}^{c}$ is the label vector of instance $x_{i}$ and $c$ is the number of seen classes in $\mathcal{C}$.
We denote $Y{=}\{y_{1},...,y_{n}\}' \in \{0,1\}^{n \times c}$ as the binary label indicator matrix, where $y_{i} \in \{0,1\}^{c}$ is the label vector of instance $x_{i}$ and $c$ is the number of seen concepts in $\mathcal{C}$.

%Denote the binary indicator matrix $Y \in \{0,1\}^{n, c}$, where $c$ is the number of seen classes in $\mathcal{C}$.

Different from the traditional supervised feature selection setting where training and testing instances all belong to the same seen concept $\mathcal{C}$, the problem of \emph{Zero-Shot Feature Selection (ZSFS)} considers a more challenging setting where testing instances belong to a related but ``unseen'' concept set $\mathcal{C}^{u}$.
%this paper considers a more challenging case where testing instances belong to a related but ``unseen'' concept set $\mathcal{C}^{u}$.
In other words, training and testing instances share no common concepts: $\mathcal{C}\cap \mathcal{C}^{u} {=} \varnothing$.
%the source and target classes are disjoint: $C_{s} \cap C_{t} {=} \varnothing$.
%The goal is to build the feature selection model that generalizes well to unseen concepts in $\mathcal{C}^{u}$, with only the training instances X of seen concepts in $\mathcal{C}$.
Using only the training instances $X$ belonging to the seen concepts in $\mathcal{C}$, our goal is to learn a feature selection model that generalizes well to the unseen concepts in $\mathcal{C}^{u}$.

%select representative features for the instances belonging to the unseen concept set $\mathcal{C}^{u}$, using the limited training examples of seen concept set $\mathcal{C}$.
%In addition, to transfer knowledge from seen classes to unseen classes, for each class
%for each class in C
%s
%[Ct
%, there are a attributes describing its high-level characteristics.
%
%c2C
%s
%[C
%t
%, we have a label semantic vector a
%c 2R
%r
%for it.

%Given the limited ``seen'' training images $X$, our goal is to select the representative features of the images belong to the unseen classes.

\section{The Proposed Method}\label{sect_method}
%%In this section, we provide the detail
%%In this section, we provide a complete description of the proposed method \myalg.
%In this section, we provide a detailed description of the proposed method \myalg.
%%In this section, we provide details about the proposed \myalg\ method.
%Firstly, we discuss the feasibility of transferring knowledge to unseen concepts from seen concepts.
%Then, we describe the formulation of \myalg\ method.
%Finally, we provide an effective solution to address the optimization issue of \myalg.
%%Finally, we analyse the computational complexity of the optimization method.

In this section, we provide a detailed description of the proposed method.
Firstly, we discuss the feasibility of deducing knowledge of unseen concepts from seen concepts.
%transferring knowledge to unseen concepts from seen concepts.
Then, we describe the formulation of our method.
Finally, we provide an effective solution to address the involved optimization issue.
%In this section, we present the proposed ranking based
%factorization method for POI recommendation. We first
%formulate the POI recommendation problem by a ranking
%objective function, and then introduce how to optimize it.
%Finally, we generalize the model for time-aware POI recommendation.

\subsection{Deduce Knowledge for Unseen Concepts}
%As mentioned above, the key idea is to reuse the supervised knowledge of seen classes for the coming unseen classes.
%On the other hand, class labels usually have semantic meanings, like ``cat'' and ``dog''.
%According to theory of ZSL, seen class labels can be described via semantic descriptions xxxx.
%Therefore, we can build vectorial representations in the semantic space for class labels which reflects the semantic relationship between them, i.e., the attributes describing the xxxxx. knowledge base of semantic properties
%In light of this,
%Based on this fact, the features, which best distinguish these attributes, could .
The success of ZSL demonstrates that the capacity of inferring attributes allows us to describe, compare, or even categorize unseen objects.
Taking the seen concepts in Fig.~\ref{fig_method} as an example, with the training images of these concepts, we could learn a mapping function between image features and some attributes (e.g., ``has stripes'', ``can fly'' and ``is metal'').
%learn a mapping function between low-level features and some attribute
% can learn the knowledge about some attributes (e.g., ``has stripes'', ``has fur'' or ``is metal'').
%We capture the knowledge about some attributes
%the attributes is captured
%With this learned knowledge, we can classify the unseen ``koala'' and ``helicopter'' by inferring their attributes without any training images.
%To recognize unseen object such as ``koala'' and ``helicopter'', we can use the learned knowledge to infer the attributes predict the attributes of the images
%With this learned knowledge,
As such, we can infer the attributes of the objects belonging to unseen concepts such as ``koala'' and ``quadrotor'', so as to classify them without any training examples.
%In other words, ZSL approaches take attributes as an intermediate bridge to transfer knowledge from seen classes to unseen classes.

%At the testing stage, after successfully inferring the attributes of the unseen images, we can classify unseen classes such as ``koala'' and ``helicopter'' without any training images.
%With this learned knowledge, we can infer the attributes of unseen class ``koala'' and ``helicopter'', so as to classify them without any training images.

%In light of this, we use attribute representation to replace with the original independent binary label information.
%In light of this, as illustrated in Fig.~\ref{fig_example}, we project the original binary labels to semantic representation space.
In light of this, as illustrated in Fig.~\ref{fig_method}, we propose to replace the original class labels with class-attribute descriptions (i.e., attributes) to guide feature selection.
%we use attribute labels to replace the original labels, so as to further guide feature selection.
Unlike the original class labels (e.g., the 0/1-form label vectors in Fig.~\ref{fig_problem}) which only reflect the discrimination among seen concepts, attributes provide additional semantic information about unseen concepts, making the feature selection models trained with seen concepts generalize well to unseen concepts.
% discriminative information
%In this way, supervised knowledge are successfully transferred from the seen classes to unseen classes.
In other words, by introducing attributes, we can deduce the knowledge of unseen concepts from seen concepts.
%deduce the supervised knowledge from the seen concepts to unseen concepts.
% by these attribute labels.
%After that, feature selection is guided by these attributes.
%Since these attributes provide discriminative semantic descriptions about the unseen categories, the selected features should also have this discriminative power.
%use attribute representation to replace with the original independent binary label information.
%Specifically, for each seen class type, we replace the original binary labels with its attribute representation vector.
%In this semantic space, semantic correlations among different classes can be quantitatively measured and captured.
%These attributes provide semantic information about the unseen categories.
%Every class label is described by pure text.
%In the subsequent part, we denote the used semantic attribute labels as $Y_{s} {\in} \mathbb{R}^{n, a}$, where $a$ is the attribute number.
In the subsequent part, we denote this kind of semantic knowledge as $Y_{s} = [y^{s}_{1}, ..., y^{s}_{n}]' \in \mathbb{R}^{n \times m}$, where $y^{s}_{i} \in \mathbb{R}^{m}$ denotes the attributes of the seen concept which instance $x_{i}$ belongs to, and $m$ is the attribute number.
%In the subsequent part, we denote the used attribute labels as $Y_{s} {\in} \mathbb{R}^{n \times m}$, where $m$ is the attribute number.
These attributes can either be provided manually or generated automatically from online textual documents~\cite{rohrbach2010helps,qiao2016less}, such as Wikipedia articles.
%In other words, each seen class is described by a xxxxxx.~\cite{qiao2016less}
%With these Then we can select features best
%After that, if we can select the features which best distinguish these attributes.
%Since these attributes provide semantic information about the unseen categories, with these attributes, the selected features should also have this generalizing power.
%Since these attributes have the capability to distinguish the coming unseen classes, the selected features should also have this generalizing power.
%When an unseen object shows up, these attribute discriminative features are capable to categorize unseen objects in a semantic level.
%When learning the attributes, we could generalize to new types of objects
%the seen class labels can be projected into the semantic space.
%This is because attributes are shared across different categories and provide a compact but discriminative representation.
%Therefore, unseen labels can
%leverage the well-established mapping from its semantically close
%seen categories.
%In this paper, we adopt pure textual descriptions as attributes to describe seen class labels.
%These attributes can either be manually given or .

\subsection{Zero-Shot Feature Selection}
\subsubsection{Feature selection with attributes}
%With the used semantic labels, the feature selection procedure is described as follows.
%we can perform feature selection.
%Recall that we intend to use the amendatory supervised knowledge to guide the feature selection $Y_{s}$.
We use $s {=} \{0, 1\}^{d}$ as the indicator vector, where $s_{i} $=$ 1$ if the $i$-th feature is selected and $s_{i} $=$ 0$ otherwise.
%Let us denote the diagonal matrix with diagonal elements $s$ as $diag(s)$.
%Diag(s) denotes the matrix with the main diagonal beings
Denoting $\mathrm{diag}(s)$ as the matrix with the main diagonal being $s$, the original data can be represented as $X\mathrm{diag}(s)$ with the selected features.
%Then, the original data can be represented as $X\mathrm{diag}(s)$ with the selected features.
%Therefore, the new representation of training example constructed by the selected features is $X\mathrm{diag}(s)$.
%Therefore, the selected features comprise a new representation of training example, and we denote it as $X\mathrm{diag}(s)$.
% can be represented as $ X' = X\mathrm{diag}(s)$.
From a generative point of view, we assume that the selected features should have the ability to generate the given attributes.
%We further assume that these selected features could generate the amendatory supervised knowledge $Y_{s}$.
For simplicity and efficiency, we use a linear generating function $W {\in} \mathbb{R}^{d \times m}$, and adopt the squared loss to measure the error.
%Then the generation can be captured by the following optimization problem:
%Therefore, the optimal $s$ is given by the following optimization problem:
Therefore, the optimal $s$ can be obtained by solving the following minimization problem:
%Thus, the feature selection problem can be captured by the following optimization problem:
%Recall that we intend to use the amendatory supervised knowledge to guide the feature selection $Y_{s}$, the generation can be captured by the following optimization problem:
%\begin{equation}
%\label{eq_basic_source}
%\begin{aligned}
%\underset{f}{\text{min }} & \sum_{i=1}^{n_{s}}\mathcal{L}(f(x_{i}\mathrm{diag}(s)),y_{si}) + \mathcal{R}(f) \\
%\mathrm{s.t.}   & \hspace{0.5em} s \in \{0,1\},\ s^{T}\textbf{1}_{d}=k
%\end{aligned}
%\end{equation}
%where $\mathcal{L}(\cdot,\cdot)$ is a loss function, $f$ is the learned classifier, and $\mathcal{R}$ is the regularization term for $f$ to avoid overfitting.

\begin{equation}
\label{eq_semantic_selection}
\begin{aligned}
\underset{W, s}{\text{min }} & \mathcal{J}_{1} = \left \| Y_{s}-X\mathrm{diag}(s)W \right \|^{2}_{F} + \gamma \left \| W \right \|^{2}_{F} \\
\mathrm{s.t.}   & \hspace{0.5em} s \in \{0,1\}^{d},\ s^{T}\textbf{1}_{d}=k
\end{aligned}
\end{equation}
where $k$ is the number of features to select, $\gamma$ is the regularization parameter to avoid overfitting, and $\textbf{1}_{d}$ is a column vector with all its elements being 1.
The most important part of Eq.~\ref{eq_semantic_selection} is the attributes $Y_{s}$ through which the power of categorizing unseen objects is captured by the selected features.
%which would make the selected features to have the discriminative power of categorizing unseen objects.
%The benefit of the above formulation is that it can narrow down the semantic gap between seen and unseen classes.
%The benefit of the above approach is that the selected features would capture the discriminative power of categorizing unseen objects via attributes.

%Problem~\ref{eq_semantic_selection} can be seen as the classical ($\ell_{1}$-norm based) Lasso problem with the modified supervised knowledge.
%By finding the solution that minimize Eq.~\ref{eq_semantic_selection}, we select $k$ features which best generate semantic labels (descriptions) of seen classes.
%select features that are beneficial in learning attribute classifiers.
%We can even learn new categories withnovisual examples, using textual de-scriptions instead.

\subsubsection{The center-characteristic loss}
%\begin{figure}[htbp]
%\centering
%    \includegraphics[width=0.4\textwidth]{images/center_loss.pdf}
%\caption{Separable Features vs. Discriminative Features (center loss) \protect\cite{wen2016discriminative}.}
%\label{fig_center-loss}
%\end{figure}

\begin{figure}[!t]
\centering
    \includegraphics[width=0.8\textwidth]{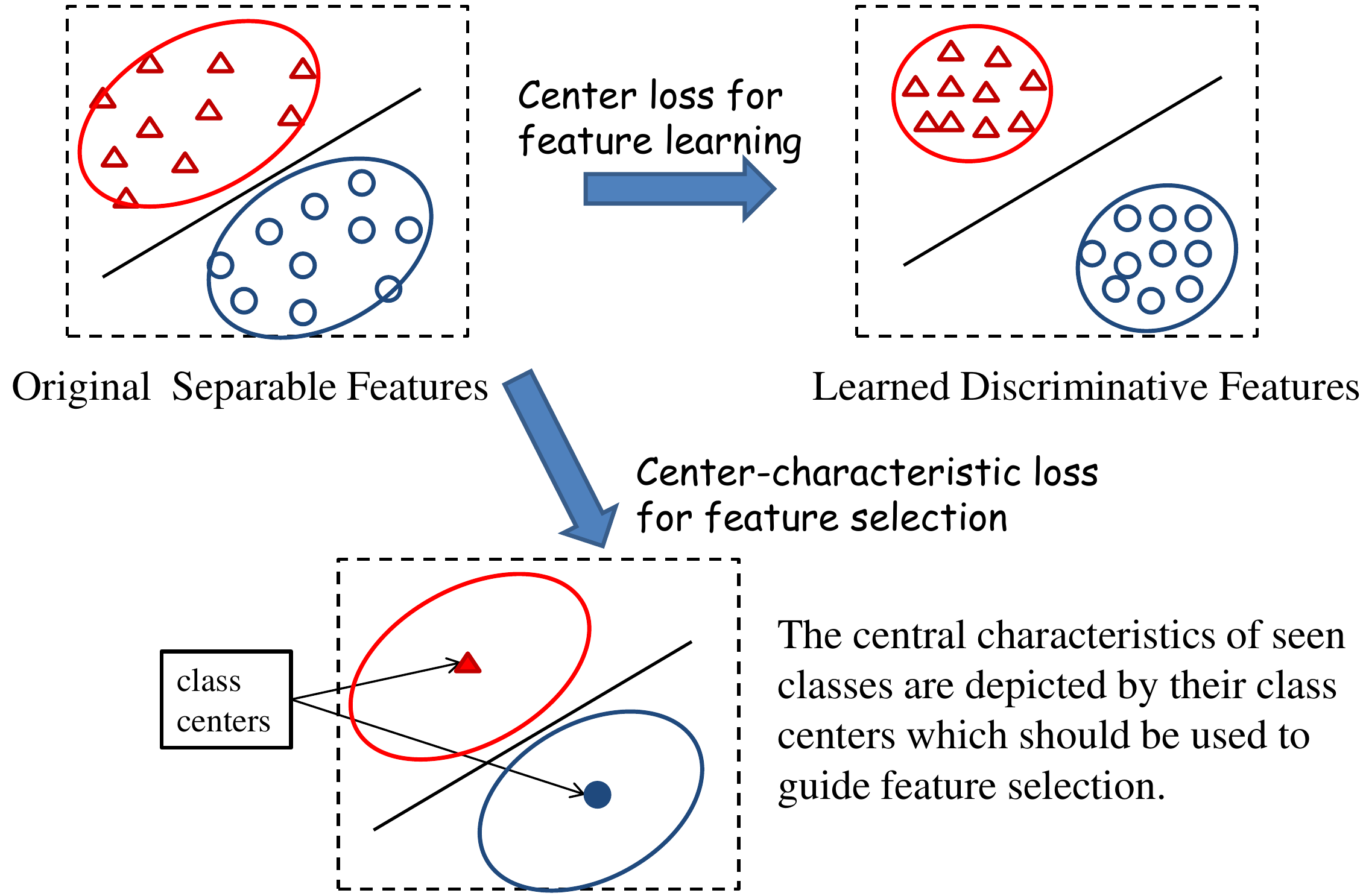}
%\caption{Center loss \protect\cite{wen2016discriminative} vs. center-characteristic loss.}
\caption{Center loss vs. Center-characteristic loss.}
\label{fig_center-loss}
\end{figure}

Although introducing attributes seems to be an effective solution for the ZSFS problem, it still has some limitations for discriminative feature selection.
%Although attributes seem a very efficient solution for the ZSFS problem, attributes still has some limitations.
%have the capability of categorizing unseen classes, attributes still has some limitations.
In particular, attributes cannot naturally describe the uncertainty about a class whose appearance may vary significantly~\cite{ren2016joint}.
%whose instances usually appear differently~\cite{ren2016joint}.
%which is visually distinguished
%which may show different .
%This is because that even though various images belong to the same class, uncertainty is associated with them.
For instance, ``televisions'' may have different colors and look quite differently from different angles.
% through different perspectives
%have quite different appearances through different perspectives.
Moreover, attributes, which can be seen as real-valued multi-labels, tend to be more noisy than the common single-class labels~\cite{jayaraman2014zero}.
% contain more noise compared to the binary single-labeled data. binary single-labels
For example, it is hard to determine a proper score for the attribute ``running fast'' for different animals and man-made vehicles.
%These limitations would negatively affect the discrimination of selected features.
%These limitations would negatively impact the discrimination feature selection.
% the quality of the selected features.
%These limitations would confuse the feature selection model to identify discriminative features.

Recent work~\cite{wen2016discriminative} has pointed out that discriminative features are more generalized for identifying unseen objects than separable features.
%Specifically, as illustrated in Fig.~\ref{fig_center-loss}, center loss minimizes the distances between the learned features and their corresponding class centers.
Specifically, to learn discriminative features, the work in \cite{wen2016discriminative} introduces \emph{center loss} to minimize the distances between the learned features and their corresponding class centers (as illustrated in the upper portion of Fig.~\ref{fig_center-loss}).
%has pointed out that the learned features should close to their corresponding class centers
%The learned features need to be not only separable but also discriminative...
%generalized enough for identifying new unseen
%As illustrated in Fig.~\ref{fig_center-loss}, discriminative features better compact intra-class variations and distinguish inter-class differences.
%As shown in Fig.~\ref{fig_center-loss}, discriminative features better compact intra-class variations and distinguish inter-class differences.
%Motivated by this, we force the the selected features should be close to their corresponding class centers.
%Accordingly, we minimize the distances between the selected features and their corresponding class centers, which yields the following minimization problem:
Accordingly, for the feature selection task, this loss minimizes the distances between the selected features and their corresponding class centers, which yields the following minimization problem:
%by minimizing the following objective function.
%encourage to select the features lying close to their corresponding class centers.
%This minimizes the following objective function:
%we introduce a new loss function for feature selection, \emph{self-centeredness loss}, namely the selected features should be close to their corresponding class centers.
\begin{equation}
\label{eq_center-loss}
\begin{aligned}
\underset{s}{\text{min }} & \mathcal{J}_{2} = \left \| X\mathrm{diag}(s) -\bar{X}\mathrm{diag}(s) \right \|^{2}_{F} \\
%\sum_{i=1}^{n}\left \| x_{i}\mathrm{diag}(s) -\bar{x}_{i}\mathrm{diag}(s) \right \|^{2}_{F} \\
\mathrm{s.t.}   & \hspace{0.5em} s \in \{0,1\}^{d},\ s^{T}\textbf{1}_{d}=k
\end{aligned}
\end{equation}
%where $\bar{x}_{i}$ is the denotes the $y_{i}$-th class center of features.
where $\bar{X} = \{\bar{x}_{1}, ..., \bar{x}_{n} \}'\in \mathbb{R}^{n \times d}$, and $\bar{x}_{i}$ denotes the feature vector of the $y_{i}$-th class center.
%$\bar{x}_{i}$ denotes the $y_{i}$-th class center of features.

However, this constraint may be too strict for a feature selection task, since all features are pre-calculated.
On the other hand, as our ultimate goal is to find the relevance between features and attributes, it is reasonable to incorporate the attributes into this loss function.
% we need to move towards generating more integrative models
Therefore, we modify this constraint by multiplying the terms in the square loss with the generating function $W$ and approximate $X\mathrm{diag}(s)W$ with $Y_{s}$ according to Eq.~\ref{eq_semantic_selection}.
%This modification make xxxx connection xxxx, which xxxxx.
%, as XXXXXXXXXX for some reasons
%Finally, we get the following loss function:
After this modification, Eq.~\ref{eq_center-loss} becomes:
\begin{equation}
\label{eq_self-center-loss}
\begin{aligned}
\underset{W,s}{\text{min }} & \mathcal{J}_{2} = \left \| Y_{s}-\bar{X}\mathrm{diag}(s)W \right \|^{2}_{F}  \\
\mathrm{s.t.}   & \hspace{0.5em} s \in \{0,1\}^{d},\ s^{T}\textbf{1}_{d}=k
\end{aligned}
\end{equation}

We call this loss function \emph{center-characteristic loss}, as it encourages the selected features to capture the central characteristics of seen concepts (illustrated in the lower portion of Fig.~\ref{fig_center-loss}).
%Meanwhile, xxxx to attributes.
%Intuitively, this xxxx aims to.
%compacts intra-class variations
%it encourages the selected features to lie in
% encourages diversity between individual predictors of an ensemble
%compact intra-class variations
%namely for each class, the selected feature should close to the class center.
%%Intuitively, this loss function ensures the selected features should lie in the cluster centers.
%Intuitively, the features which capture the central characteristics of seen classes are preferred.

%namely the selected features should not only separable but also discriminative.
%In other words, the selected feature should tend to self-centeredness.
%Feature center loss should show lower intra-class variations.

\subsubsection{Zero-Shot feature selection joint learning model}
%\subsubsection{The Unified Model: \myalg}
By combining the objective functions in Eqs.~\ref{eq_semantic_selection} and~\ref{eq_self-center-loss}, the proposed method is to solve the following optimization problem:
%\textbf{Final:}
%\begin{equation}
%\small
%\label{eq_final_cost01}
%\begin{aligned}
%\underset{W, s}{\text{min }} & \mathcal{J} {=} \left \| Y_{s}{-}X\mathrm{diag}(s)W \right \|^{2}_{F} {+} \alpha \left \| Y_{s}{-}\bar{X}\mathrm{diag}(s)W \right \|^{2}_{F} {+} \gamma \left \| W \right \|^{2}_{F} \\
%\mathrm{s.t.}   & \hspace{0.5em} s \in \{0,1\}^{d},\ s^{T}\textbf{1}_{d}=k
%\end{aligned}
%\end{equation}

\begin{equation}
\small
\label{eq_final_cost01}
\begin{aligned}
\underset{W, s}{\text{min }} & \mathcal{J} = \left \| Y_{s}{-}X\mathrm{diag}(s)W \right \|^{2}_{F} {+} \alpha \left \| Y_{s}{-}\bar{X}\mathrm{diag}(s)W \right \|^{2}_{F} \\
& \hspace{2em} + \gamma \left \| W \right \|^{2}_{F} \\
\mathrm{s.t.}   & \hspace{0.5em} s \in \{0,1\}^{d},\ s^{T}\textbf{1}_{d}=k
\end{aligned}
\end{equation}
where $\alpha$ is a balancing parameter. We call the proposed method as \emph{Semantic based Feature Selection (\myalg)}, since we exploit the semantic knowledge (i.e., attributes) of seen concepts for the ZSFS problem.

%se\textbf{M}antic based zer\textbf{O} sh\textbf{O}t featu\textbf{R}e s\textbf{E}lection (MOORE)
%as \emph{Semantic based Zero Shot Feature Selection (SZSFS)}, since we exploit the semantic knowledge of seen classes for the ZSFS problem.

Note that our method can be extended to handle both seen and unseen concepts, if we (just like the traditional feature selection methods) incorporate the original class labels into the final objective function.
We leave this extension as future work.
%into our model. incorporate
%consider the effect of the original label information as traditional feature selection methods do.

%if the effect of the original label information is also considered, which has been extensively studied by traditional feature selections.
%once we consider the effects of the original label information, which has been extensively studied by traditional feature selections.
%\textcolor[rgb]{1.00,0.00,0.00}{simultaneously consider the functional objectives} used in traditional supervised feature selection methods.
%consider the objective functions used in the existing traditional supervised feature selection methods.

\subsection{Optimization}
The optimization problem in Eq.~\ref{eq_final_cost01} is a `0/1' integer programming problem, which might be hard to solve by conventional optimization tools.
Therefore, in this subsection, we give an efficient solution for this problem.
%which is well known to be NP-hard~\cite{nemhauser1988integer}.
%First of all, to make the optimization tractable, we relax the `0/1' constraint on $s$ and only require $s$ to be non-negative.
First of all, to make the optimization tractable, we relax this `0/1' constraint by allowing $s$ to take real non-negative values.
%to be a non-negative real-valued.
% on $s$ and only require $s$ to be non-negative.
This relaxation yields the following optimization problem:
%\begin{equation}
%\small
%\label{eq_final_cost_relax}
%\begin{aligned}
%\underset{W, s}{\text{min }} & \mathcal{J} {=} \left \| Y_{s}{-}X\mathrm{diag}(s)W \right \|^{2}_{F} {+} \alpha \left \| Y_{s}{-}\bar{X}\mathrm{diag}(s)W \right \|^{2}_{F} {+} \gamma \left \| W \right \|^{2}_{F} \\
%\mathrm{s.t.}   & \hspace{0.5em} s \geq 0
%\end{aligned}
%\end{equation}
\begin{equation}
\small
\label{eq_final_cost_relax}
\begin{aligned}
\underset{W, s}{\text{min }} & \mathcal{J} = \left \| Y_{s}{-}X\mathrm{diag}(s)W \right \|^{2}_{F} {+} \alpha \left \| Y_{s}{-}\bar{X}\mathrm{diag}(s)W \right \|^{2}_{F} \\
& \hspace{2em} + \gamma \left \| W \right \|^{2}_{F} \\
\mathrm{s.t.}   & \hspace{0.5em} s \geq 0
\end{aligned}
\end{equation}

As such, the objective function is differentiable w.r.t. the real-valued $s$.
Intuitively, the value of $s_{i}$ can be interpreted as the importance score of $i$-th feature.
Important features would have higher scores, while the scores of unimportant features tend to shrink towards 0.
Therefore, after obtaining the solution of Eq.~\ref{eq_final_cost_relax}, we can rank $s$ to identify important features.
%we can get the top ranked features.
%we can select features by solving this optimization problem.

%After obtaining the relaxed solution on $s$, we can get important features.
%rank all features according to the value of $s_{i}$  in a descending order and get the top ranked features.
%the feature selection for seen categories,
% to deal with applications to a wide variety of feature selection problems
%once we involve the loss term used in traditional supervised feature selection methods.

The optimization problem in Eq.~\ref{eq_final_cost_relax} can be solved iteratively.
%we adopt an alternating optimization framework to obtain a local optima
Specifically, this consists of the following two steps.

\mysubsection{Fix $s$ and update $W$}
When $s$ is fixed, Eq.~\ref{eq_final_cost_relax} is convex w.r.t $W$.
%Therefore, we remove the terms that are irrelevant to $W$, then reformulate the objective function in Eq.~\ref{eq_final_cost_relax} as follows:
Therefore, after removing the terms that are irrelevant to $W$, we can reformulate the objective function in Eq.~\ref{eq_final_cost_relax} as the following unconstrained optimization problem:

%\begin{equation}
%\small
%\label{eq_cost_relax_W}
%\begin{aligned}
%\underset{W}{\text{min }} & \mathcal{J}_{W} {=} \left \| Y_{s}{-}X\mathrm{diag}(s)W \right \|^{2}_{F} {+} \alpha \left \| Y_{s}{-}\bar{X}\mathrm{diag}(s)W \right \|^{2}_{F} {+} \gamma \left \| W \right \|^{2}_{F}
%\end{aligned}
%\end{equation}

\begin{equation}
\small
\label{eq_cost_relax_W}
\begin{aligned}
\underset{W}{\text{min }} & \mathcal{J}_{W} {=} \left \| Y_{s}{-}X\mathrm{diag}(s)W \right \|^{2}_{F} {+} \alpha \left \| Y_{s}{-}\bar{X}\mathrm{diag}(s)W \right \|^{2}_{F} \\
& \hspace{2em} + \gamma \left \| W \right \|^{2}_{F}
\end{aligned}
\end{equation}
%When $s$ if fixed, the derivations of Eq.~\ref{eq_final_cost_relax} with respect to $W$ is:
The derivative of $\mathcal{J}_{W}$ w.r.t. $W$ is:
\begin{equation}
\label{eq_derive_W}
\begin{aligned}
&\frac{\partial \mathcal{J}_{W}}{\partial W} = -2(X\mathrm{diag}(s))'Y_{s} + 2(X\mathrm{diag}(s))'(X\mathrm{diag}(s))W\\
 &-2\alpha(\bar{X}\mathrm{diag}(s))'Y_{s} {+} 2\alpha(\bar{X}\mathrm{diag}(s))'(\bar{X}\mathrm{diag}(s))W {+} 2\gamma W\\
%&\frac{\partial \mathcal{J}}{\partial W} = -2(X\mathrm{diag}(s))'Y_{s} + 2(X\mathrm{diag}(s))'(X\mathrm{diag}(s))W\\
% &-2\alpha(B\mathrm{diag}(s))'Y_{s} + 2\alpha(B\mathrm{diag}(s))'(B\mathrm{diag}(s))W + 2\gamma W\\
% &\frac{\partial \mathcal{J}}{\partial W} = -2(XS)'Y_{s} + 2(XS)'(XS)W \\& -2\alpha(BS)'Y_{s} + 2\alpha(BS)'(BS)W + 2\gamma W \\
%& W = [(XS)'(XS) + \alpha (BS)'(BS) + \gamma I_{d}]^{-1}[(XS)' + \alpha (BS)']Y_{s}
\end{aligned}
\end{equation}

\noindent By setting this derivative to zero, we obtain the closed form solution for $W$:
\begin{equation}
\label{eq_update_W}
\begin{aligned}
%W =& [(XS)'(XS) {+} \alpha (\bar{X}S)'(\bar{X}S) {+} \gamma I_{d}]^{-1}[(XS)' {+} \alpha (\bar{X}S)']Y_{s} \\
W {=}& [(X\mathrm{diag}(s))'(X\mathrm{diag}(s)) {+} \alpha (\bar{X}\mathrm{diag}(s))'(\bar{X}\mathrm{diag}(s)) \\
& {+} \gamma I_{d}]^{-1}[(X\mathrm{diag}(s))' {+} \alpha (\bar{X}\mathrm{diag}(s))']Y_{s}
\end{aligned}
\end{equation}
%\subsubsection*{Fix $s$, $V$ and update $W$}
%The derivations of $\mathcal{J}$ with respect to $W$ is:
%\begin{equation}
%\label{eq_update}
%\begin{aligned}
%&\frac{\partial \mathcal{J}}{\partial W} = -2\alpha(X\mathrm{diag}(s))'Y_{s} + 2\alpha(X\mathrm{diag}(s))'(X\mathrm{diag}(s))W + 2\gamma W\\
%& W = [\alpha (X\mathrm{diag}(s))'(X\mathrm{diag}(s)) + \gamma I_{d}]^{-1}(\alpha X\mathrm{diag}(s))'Y_{s}
%\end{aligned}
%\end{equation}

\mysubsection{Fix $W$ and update $s$}
When $W$ is fixed, the objective function in Eq.~\ref{eq_final_cost_relax} can be rewritten as follows:
%w.r.t. $s$ becomes:
\begin{equation}
\small
\label{eq_cost_relaxed_s}
\begin{aligned}
\underset{s}{\text{min }} & \mathcal{J}_{s} {=} \left \| Y_{s}{-}X\mathrm{diag}(s)W \right \|^{2}_{F} {+} \alpha \left \| Y_{s}{-}\bar{X}\mathrm{diag}(s)W \right \|^{2}_{F}\\
\mathrm{s.t.}   & \hspace{0.5em} s \geq 0
\end{aligned}
\end{equation}

%\begin{equation}
%\label{eq_obj_cost_s}
%\begin{aligned}
%\underset{s}{\text{min }} & \mathcal{J} = \left \| Y_{s}-X\mathrm{diag}(s)W \right \|^{2}_{F} + \alpha \left \| Y_{s}-B\mathrm{diag}(s)W \right \|^{2}_{F}  \\
%\mathrm{s.t.}   & \hspace{0.5em} s \geq 0
%\end{aligned}
%\end{equation}

%When $W$ is fixed, the derivations of Eq.~\ref{eq_final_cost_relax} with respect to $s$ is:
\noindent The derivative of $\mathcal{J}_{s}$ w.r.t. $s$ is:
\begin{equation}
\label{eq_update_s}
\begin{aligned}
%\frac{\partial \mathcal{J}}{\partial s} = [- 2 X'Y_{s}W' + 2 X'XSWW' \\ -2\alpha B'Y_{s}W' + 2\alpha B'BSWW' ]_{pp}\\
%\frac{\partial \mathcal{J}}{\partial s} = [- 2 X'Y_{s}W' + 2 X'XSWW' \\ -2\alpha \bar{X}'Y_{s}W' + 2\alpha \bar{X}'\bar{X}SWW' ]_{pp}\\
\frac{\partial \mathcal{J}_{s}}{\partial s} &= [- 2 X'Y_{s}W' + 2 X'X\mathrm{diag}(s)WW' \\
& -2\alpha \bar{X}'Y_{s}W' + 2\alpha \bar{X}'\bar{X}\mathrm{diag}(s)WW' ]_{diag}\\
\end{aligned}
\end{equation}
where $[ \cdot ]_{diag}$ denotes the vector of diagonal elements of matrix $[\cdot]$.
% the diagonal element of matrix $[\cdot]$.
In addition, since we require $s$ to be non-negative, we perform Projected Gradient Descent (PGD)~\cite{calamai1987projected} for this constrained optimization problem.
%Specifically, we project $s$ as $s = max(0, s)$ after each gradient updating step.
Specifically, we project $s$ to be non-negative after each gradient updating step:
\begin{equation}
\label{eq_project_s}
\begin{aligned}
\mathrm{Proj}(s_{i}) = max(0, s_{i}), \forall i=1,...,d
\end{aligned}
\end{equation}

%In sum, we can alternatively perform the above two steps until it converges or reaches user-specified maximum number of iterations.
In summary, we can alternatively perform the above two steps until it converges or reaches a user-specified maximum iteration number.
% the user-specified maximum iteration number
For clarity, we summarize this optimization procedure in Alg.~\ref{alg_ours}.

%Specifically, we project s back to [0 , 1] after each gradient updating step.
%$$s = max(0, s)$$

%\input{our_method_L1_center_x_center}

%\input{our_method_gaussian}
\begin{algorithm}[!tb]
\caption{Semantic based Feature Selection (\myalg)}
\label{alg_ours}
\begin{algorithmic}[1]
\Require
Seen concept instances $X$ and attributes $Y_{s}$;
Parameters $\alpha$ and $\gamma$;
%$X$, $Y_{s}$, $\alpha$, $\gamma$
% and iteration num $\tau=0$;
%Iteration num $t=0$;
\Ensure Top ranked features for unseen concepts;
    %\State Initialize $U$, $V$ and $H$ randomly;
    \State Calculate the center of each seen concept $\bar{X}$ ;
    \Repeat
        %\State $\tau = \tau + 1$ ;
        \State Update $W$ by Eq.~\ref{eq_update_W};
        \State Update $s$ through performing projected gradient descent by Eq.~\ref{eq_update_s};
    %\Until{Convergence or $\tau=maxIterations$;}%
    \Until{Convergence or a certain iterations;}%
    \State Rank features according to $s$ in a descending order and return the top ranked features.
\end{algorithmic}
\end{algorithm} 
\section{Algorithm Analysis}\label{sect_discuss}
%In this section, we analyze the convergence and time complexity of our algorithm.

\subsection{Convergence Study}
\begin{proposition} \label{pro_alg}
At each iteration of Alg.~\ref{alg_ours}, the objective value of Eq.~\ref{eq_final_cost_relax} decreases until convergence.
\end{proposition}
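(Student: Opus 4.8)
The plan is to treat Alg.~\ref{alg_ours} as a block coordinate descent (alternating minimization) scheme over the two blocks $W$ and $s$, and to show that the objective $\mathcal{J}$ in Eq.~\ref{eq_final_cost_relax} generates a monotonically non-increasing sequence that is bounded below, hence convergent. First I would observe that $\mathcal{J}$ is a sum of three squared Frobenius norms, so $\mathcal{J} \geq 0$ for every feasible $(W,s)$; this supplies the lower bound. Writing $(W^{(t)}, s^{(t)})$ for the iterates, it then suffices to prove that neither of the two update steps increases the objective.

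For the $W$-step, with $s$ held fixed, the subproblem in Eq.~\ref{eq_cost_relax_W} is an unconstrained regularized least-squares problem that is strictly convex in $W$, since the $\gamma \left\| W \right\|^{2}_{F}$ term makes the Hessian positive definite. Because Eq.~\ref{eq_update_W} is obtained by setting the gradient in Eq.~\ref{eq_derive_W} to zero, it returns the unique global minimizer of this convex subproblem, so the objective cannot increase: $\mathcal{J}(W^{(t+1)}, s^{(t)}) \leq \mathcal{J}(W^{(t)}, s^{(t)})$.

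For the $s$-step, with $W$ fixed, I would note that $\mathrm{diag}(s)$ is linear in $s$, so each term $\left\| Y_{s}-X\mathrm{diag}(s)W \right\|^{2}_{F}$ is a convex quadratic in $s$; hence $\mathcal{J}_{s}$ in Eq.~\ref{eq_cost_relaxed_s} is convex with a Lipschitz-continuous gradient (Eq.~\ref{eq_update_s}) on the feasible region. The feasible set $\{s \geq 0\}$ is convex and the map in Eq.~\ref{eq_project_s} is exactly the non-expansive Euclidean projection onto it. Invoking the standard descent property of projected gradient descent --- choosing a step size no larger than the reciprocal of the Lipschitz constant of $\nabla_{s}\mathcal{J}_{s}$ (or enforcing a line search) and combining the descent lemma with non-expansiveness of the projection --- yields $\mathcal{J}(W^{(t+1)}, s^{(t+1)}) \leq \mathcal{J}(W^{(t+1)}, s^{(t)})$. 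Chaining the two inequalities gives $\mathcal{J}(W^{(t+1)}, s^{(t+1)}) \leq \mathcal{J}(W^{(t)}, s^{(t)})$, so the objective values form a non-increasing sequence bounded below by $0$, which converges by the monotone convergence theorem.

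The main obstacle I anticipate lies in the $s$-step: unlike the $W$-step, projected gradient descent does not return an exact minimizer, so the monotone-decrease guarantee hinges on controlling the step size relative to the Lipschitz constant of $\nabla_{s}\mathcal{J}_{s}$, i.e. on securing a sufficient-decrease condition at each inner iteration. I would make this rigorous either by deriving an explicit Lipschitz bound in terms of $X$, $\bar{X}$, and $W$, or by adopting a backtracking line search that guarantees descent; both routes close the argument and yield the stated monotone convergence.
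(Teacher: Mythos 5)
Your proposal is correct and follows essentially the same route as the paper's proof: exact minimization of the convex (regularized least-squares) subproblem in the $W$-step, descent of projected gradient with a suitable step size in the $s$-step, and monotone decrease plus the lower bound $\mathcal{J} \geq 0$ to conclude convergence. If anything, you are more explicit than the paper about the one delicate point --- that the $s$-step's descent guarantee requires the step size to be controlled by the Lipschitz constant of $\nabla_{s}\mathcal{J}_{s}$ (or a line search) --- which the paper handles only by citation.
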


\begin{proof}\label{proof_alg}
As shown in Alg.~\ref{alg_ours}, in each iteration, we update $W$ and $s$ in an alternative way.
First of all, when $s$ is fixed, the original optimization problem (i.e., Eq.~\ref{eq_final_cost_relax}) w.r.t. $W$ reduces to the classical least square problem~\cite{duda1995pattern}.
%The update rule Eq.xx is the optimal solution of this subproblem.
It can be easily verified that Eq.~\ref{eq_update_W} is the optimal solution of this subproblem.

%will make the objective function (Eq.xx) decreased.
%the updates of all variables make the objective function value decreased.

On the other hand, the update of $s$ is known as the projected gradient method which is guaranteed to converge to the minimum with appropriate choice of step size~\cite{luenberger2015linear}.
%~\cite{grippo2000convergence}
%not only monotonically decreases the objective function in Eq.xx but also has a lower bound, i.e., 0.
%As proved in (Grippo and Sciandrone 2000), this update will lead to a local minima of the objective.
Therefore, Alg.~\ref{alg_ours} monotonically decreases the objective function value.
%In addition, since the objective function in Eq.~\ref{eq_final_cost_relax} is jointly convex, which completes the proof.
In addition, since the objective function has a lower-bound of 0, Alg.~\ref{alg_ours} converges.
%Consequently, Algorithm~\ref{alg_ours} can converge to a local minima of the objective Eq.xxx.
\end{proof}

\subsection{Time Complexity}
%Suppose the total seen instance number is $n$, the original feature number is $d$, and $a$ is the attribute number.
Lines 3 and 4 in Alg.~\ref{alg_ours} list two main operations of our method, and the time complexity of each operation could be computed as:
\begin{itemize}
  \item Line 3: Updating $W$ involves a matrix inversion and several matrix multiplications, and the total time complexity is
  $O(d^{3} + d^{2}n + dmn)$.
%  \item Line 4: Updating $Y_{t}$ requires the inversion of an $n_{t}$-by-$n_{t}$ matrix, which is very time consuming.
%According to~\cite{nie2010efficient}, this operation can be efficiently obtained through solving the linear equation:
%$$(I+\alpha L + \beta I)Y_{t} = (X_{t}WS_{t} + \beta X_{t}V)$$
%whose time complexity is $O(c_{t}n_{t}^{2} + dc_{t}n_{t})$.
  \item Line 4: Updating $s$ also involves some matrix multiplications, and the time complexity is $O(dmn + d^{2}m + d^{2}n )$.
\end{itemize}
%$O(dan + da^{2}+ d^{2}n + d^{3} + d^{2}a)$.
Generally speaking, the feature number and attribute number are much less than the seen instance number, i.e., ${d, m} \ll n $.
Therefore, the total time complexity of each iteration in Alg.~\ref{alg_ours} is $\#iterations * O(n)$.
As our method empirically converges quickly (usually in less than 20 iterations in our experiments), the overall time complexity
of \myalg\ is linear to seen instance number $n$.
%In the following experimental section, we can see our algorithm converges fast, usually in less than 20 iterations.

%Note that if the feature number is huge, the matrix inversion (in Line 3) could be efficiently obtained through solving a linear equation~\cite{nie2010efficient}, and the complexity would reduce to $O(d^{2})$.

Note that if the feature number is large, the matrix inversion in Line 3 could be very time consuming.
According to~\cite{nie2010efficient}, this operation can be efficiently obtained through solving the following linear equation:
\begin{equation}
\label{eq_update_W_linear}
\begin{aligned}
A W {=} [(Xdiag(s))' {+} \alpha (\bar{X}diag(s))']Y_{s}
%\gamma I_{d}^{-1} W {=} [(Xdiag(s))' {+} \alpha (\bar{X}diag(s))']Y_{s}
\end{aligned}
\end{equation}
where $ A {=} (Xdiag(s))'(Xdiag(s)) {+} \alpha (\bar{X}diag(s))'(\bar{X}diag(s)) \\{+} \gamma I_{d}$.
As such, the time complexity of Line 3 would be reduced to $O(md^{2} + d^{2}n + dmn)$.
%In addition, we empirically show that the proposed framework converges quickly.
%In addition, we empirically show the effectiveness of this optimization method.
%In the following experimental section, we can see our algorithm converges fast, usually in less than 20 iterations.

\subsection{\myalg\ v.s. Traditional Feature Selection Methods}
Traditional feature selection methods including both unsupervised and (semi-)supervised methods all exploit the knowledge of seen concepts rather than unseen concepts~\cite{chandrashekar2014survey}.
%and consider nothing about unseen concepts.
%only consider the seen concepts and have never considered the generalization on unseen concepts.
%the generalization ability  of
%all select features which best reflect the xxx of seen concepts.
%Specifically, unsupervised methods always prefer the features best preserving the intrinsic structure of data in the seen concepts.
Specifically, unsupervised methods generally prefer the features best preserving the intrinsic structure of seen concept data.
%best preserving the data similarity relationships in the seen concepts.
%(of the seen concepts) derived from the whole feature space.
(Semi-)supervised methods prefer the features best reflecting the discrimination (i.e., class labels) among different seen concepts.
%(i.e., class labels) in the seen concepts.
%which are most relevant to class labels in the seen concepts.
%best capturing the discriminative information (i.e., class labels) in the seen concepts.
%Nevertheless, in summary, these methods only consider the information (like data similarity and discrimination) in the seen concepts.
Consequently, as the data may vary dramatically among totally different concepts, traditional methods may not generalize well to unseen concepts.
%seen and unseen concepts, traditional methods may not generalize well to unseen concepts.

Conversely, our method prefers the features best reflecting the knowledge about the attributes of seen concepts.
%proposes to guide feature selection by the attributes of the data in the seen concepts.
%As attributes can be used to categorize unseen objects
As it is practicable to categorize unseen objects via inferring attributes, the selected features in our method would also have this ability, which is further verified in our later experiments.
%the capability of inferring attributes allows us to describe, compare, or even categorize unseen objects
%As attributes can be seen as a bridge for cross-concept knowledge transfer, the selected features in our method would capture
%the discriminative power of categorizing unseen objects via attributes.
%As such, our method successfully deduce supervised knowledge for unseen concepts, which is further verified in our later experiments. 

\section{Experiments}\label{sect_expriment}
%In this section, we conduct extensive experiments to evaluate the performance of the proposed XXXXX for the task of feature selection.
%To show the effectiveness of our method, we compare our method to both states-of-the-art unsupervised and supervised feature selection methods.
%Leveraging related source domain for feature selection is so far an unexplored area.
%To the best of our knowledge, there is no directly related work to compare with.
%%Therefore, in this section, we compare our method to state-of-the-art unsupervised as well as supervised feature selection methods.
%For an extensive comparison, we compare our method CCFS to both state-of-the-art unsupervised and supervised feature selection methods.
%%Specifically, we compare with unsupervised methods in the task of clustering, and compare with supervised methods in the task of classification.
%%All the compared methods are directly applied on the target data so as to compare with ours.
%Since existing methods cannot address the cross-class case, we directly apply these compared methods on the target data so as to compare with ours.
%In this section, we present experimental details to verify the effectiveness of the proposed framework, CCFS.
%Firstly, we introduce three real-world datasets.
%Then, for an extensive comparison, we compare our method with unsupervised and supervised feature selection methods on the task of clustering and classification, respectively.
%Lastly, we study the convergence of the proposed optimization algorithm (Alg.~\ref{alg_ours}) and the effect of parameters on performance.
\begin{table}[!t]
\caption{The statistics of datasets}
\centering % used for centering table
\begin{tabular}{l|rrr}
\hline
                   &SUN &aPY      &CIFAR10 \\
\hline
\hline
\# seen concepts           &707      &20       &2    \\
\# seen images            &14,140  &12,695     &12,000  \\
\# unseen concepts           &10      &12         &8 \\
\# unseen images            &200   &2,644      &48,000 \\
\# attributes                &102     &64         &50    \\
\# features                  &4,096  &4,096      &4,096 \\
\hline
\end{tabular}
\label{tab_dataset}
\end{table}
%\begin{table}[htbp]
%\caption{Datasets}
%\centering % used for centering table
%\begin{tabular}{l|rrr}
%\hline
%                   &SUN &aPY      &AwA \\
%\hline
%\hline
%\# source classes           &707      &20       &40    \\
%\# source images            &14,140  &12,695     &24,295  \\
%\# target classes           &10      &12         &10 \\
%\# target images            &200   &2,644      &6,180 \\
%\# attributes                &102     &64         &85    \\
%\# features                  &4,096  &4,096      &4,096 \\
%\hline
%\end{tabular}
%\label{tab_dataset}
%\end{table}
\subsection{Experimental Setup}

\mysubsection{Datasets}
%We test our method on three benchmark datasets with attributes.
The experiments are conducted on three widely used benchmark datasets.
%: SUN scene attributes database (\textbf{SUN})~\cite{patterson2012sun}, aPascal/aYahoo objects dataset (\textbf{aPY})~\cite{farhadi2009describing} and \textbf{CIFAR10}~\cite{lampert2009learning}.
The first dataset is SUN scene attributes database (\textbf{SUN})\footnote{\url{https://cs.brown.edu/~gen/sunattributes.html}}~\cite{patterson2012sun} which contains 14,340 images from 717 different scenes like ``village'' and ``airport''.
%SUN scene attributes database (\textbf{SUN})~\footnote{https://cs.brown.edu/~gen/sunattributes.html}~\cite{patterson2012sun} is the first dataset which contains 14,340 images from 717 different scenes like ``village'' and ``airport''.
%Each image is annotated by a 102-dimensional binary attribute vector manually.
For each image, a 102-dimensional binary attribute vector is annotated manually.
We average the images' attributes to obtain the attributes for each concept, and follow the seen/unseen split setting as~\cite{jayaraman2014zero}.
%and we average.
%We obtain the attribute labels of each seen class by averaging attribute vectors of the instances belonging to that class.
%%For each image, 102-dimensional binary attributes are annotated manually.
%%%We average the image attributes to obtain the class attributes.
%%We obtain the attribute representation of each class by averaging attribute vectors of the instances belonging to that class.
%We follow the seen/unseen split setting in~\cite{jayaraman2014zero}.
%, and extract deep features using DeCAF~\cite{donahue2014decaf}.
%This dataset provides a standard source/target split where 40 classes are in the source domain and 10 classes are in the target domain.
%We use the recently provided DECAF features~\cite{donahue2014decaf} of this dataset.

The second dataset is aPascal/aYahoo objects dataset (\textbf{aPY})\footnote{\url{http://vision.cs.uiuc.edu/attributes/}}~\cite{farhadi2009describing} which contains two subsets.
%aPascal/aYahoo objects dataset (\textbf{aPY})~\cite{farhadi2009describing}~\footnote{http://vision.cs.uiuc.edu/attributes/} is the second dataset which contains two subsets.
The aPascal subset comes from PASCAL VOC2008 dataset, and contains 20 different categories, such as ``people'' and ``dog''.
The aYahoo subset is collected by Yahoo image search engine, and has 12 similar but different categories compared to aPascal, such as ``monkey'' and ``zebra''.
This dataset has a standard split setting, i.e., aPascal and aYahoo serve as the seen part and unseen part respectively.
%In this dataset, each image is annotated by a 64-dimensional binary attribute vector, such as “furry” and “pot”.
%We follow its standard split setting where the PASCAL part and Yahoo part serves as the seen part and unseen part respectively.
%We follow the standard setting where the PASCAL part and Yahoo part serves as the source and target domain respectively.
%The PASCAL part serves as source domain, and the Yahoo part as target domain.
Each image is annotated by a 64-dimensional binary attribute vector.
%We average the images’ attributes to obtain the attribute labels for each seen class.
%We get the attribute labels for each class by averaging the attribute vector of each image.
% is annotated by a 64-dimensional binary attribute vector, and we average the images’ attributes to obtain the attribute labels for each class.
We average the attribute vectors of images in the same category to get the class attributes.
%We use the author-provided features, such as SIFT and color histograms.
%We use the provided 9,751-dimensional ``hand-crafted'' low-level features, such as SIFT and color histograms.
%\textbf{AwA} contains 30,475 images from 50 animal classes, and each class is described by a 85-dimensional binary attribute vector.
%We adopt its standard source/target split, and use the recently provided DeCAF features.

The third dataset is \textbf{CIFAR10}\footnote{\url{https://www.cs.toronto.edu/~kriz/cifar.html}}~\cite{krizhevsky2009learning} which consists of 10 classes of objects with 6,000 images per class.
%\textbf{CIFAR10}~\cite{krizhevsky2009learning} is the third dataset which consists of 10 classes of objects with 6,000 images per class.
%and each class has 6,000 images.
%Since this dataset does not have a standard seen/unseen split setting, we simply use the first two classes (i.e., ``airplane'' and ``automobile'') as seen classes and use the others as unseen classes.
Since this dataset does not have a standard seen/unseen split setting, we randomly adopt two classes as seen.
Thus, we have $C_{10}^{2}$ different seen/unseen splits.
%Compared to the above two datasets, the split of CIFAR10 shows another extreme setting where most of the classes are unseen.
The purpose of this setting is to test the extreme case where most of the concepts are unseen.
In addition, as this dataset does not have any attribute annotations, we adopt the 50-dimensional word vector provided by~\cite{huang2012improving} as attributes for each class.
%, i.e., these attributes are automatically generated.
Concretely, these embedding vectors are automatically generated from a Wikipedia corpus~\cite{shaoul2010westbury} with a total of about 2 million articles and 990 million tokens.

%use the tool provided by~\cite{huang2012improving} to 50-dimensional word vector for each of as attribute labels.
%adopt the 50-dimensional word vector provided by as

For all these three datasets, we use the widely used 4,096-dimensional deep features provided by~\cite{guo2016semi}.
%the 4,096-dimensional features provided by~\cite{guo2016semi}.
%Specifically, the features of each image are extracted by the pre-trained Deep CNN tool Caffe~\cite{donahue2014decaf}.
The detailed statistics of these three benchmarks are listed in Table~\ref{tab_dataset}.
%For all these three datasets, we utilize the pre-trained Deep CNN tool Caffe~\cite{donahue2014decaf} and use the output of the fc7 layer which yields a 4,096-dimensional vector for each image.

%This dataset provides a standard source/target split where 40 classes are in the source domain and 10 classes are in the target domain.
%We use the recently provided DECAF features~\cite{donahue2014decaf} of this dataset.

%The classical \emph{K}-means clustering is
%The selected features by different methods
%------------------
%%use the clustering performance to evaluate the quality of selected features.
%%Two commonly used clustering quality evaluation metrics, [4] are used in our experiment.
%With the selected features, we evaluate the performance in terms of \emph{K}-means clustering by two widely used metrics, i.e., Accuracy (ACC) and Normalized Mutual Information(NMI).
%Since \emph{K}-means clustering is sensitive to initialization, we repeat the clustering 20 times with random initialization and report the results corresponding to best objective function values.
%
%Each feature selection algorithm is first performed to select features, then K-means clustering is performed based on the selected fea-tures. Since K-means often converges to local minima, we repeat each experiment 20 times and report the average performance.

%  record the average results.
\subsubsection{Comparison methods}
We compare the proposed method \myalg\ with both unsupervised and supervised feature selection methods:
%On one hand, we test the proposed method ZSFS.
%ZSFS$\backslash$\emph{scl}
%we test the ZSFS$\backslash$scl - Eliminating the effect of self-center loss by setting $\alpha = 0$.
%ZSFS$_{Eq.1}$ can be seen as the classical Lasso problem with the modified supervised knowledge.
%Besides, we also test the (i.e., $\alpha=1$ in ZSFS), which can be seen as the classical Lasso problem with the modified supervised knowledge.
%Since this is the first study to handle the Zero-Shot Feature selection problem, we compare our methods with state-of-the-art unsupervised and supervised feature selection methods:
%In summary, we compare our two methods with the following state-of-the-art unsupervised and supervised feature selection methods:
%we test the ZSFS$\backslash$scl - Eliminating the effect of self-center loss by setting $\alpha = 0$.
%\textbf{ZSFS$_{Eq.1}$} can be seen as the classical Lasso problem with the modified supervised knowledge.
%Besides, we also test the (i.e., $\alpha=1$ in ZSFS), which can be seen as the classical Lasso problem with the modified supervised knowledge.
%We compare these two versions of method with state-of-the-art unsupervised and supervised feature selection methods:
%In summary, we compare ours to the following methods:
\begin{enumerate}
  \item Random method selects features randomly.
  %which evaluates features according to their ability of preserving the local manifold structure.
  \item MCFS~\cite{cai2010unsupervised} is an unsupervised method selecting features by using spectral regression with $\ell_{1}$-norm regularization.
%  \item TRACK~\cite{wang2014unsupervised} which selects features via a unified trace ratio formulation and \emph{K}-means clustering.
  %\item RUFS, which performs robust clustering and ro-bust feature selection simultaneously to select the most important and discriminative features.
  \item FSASL~\cite{du2015unsupervised} is an unsupervised method performing structure learning and feature selection simultaneously.
  \item LASSO~\cite{tibshirani1996regression} is the classical supervised feature selection method.
  \item L20ALM~\cite{cai2013exact} is a supervised method evaluating features under an $\ell_{21}$-norm loss function with $\ell_{20}$-norm constraint.
  %\item CSFS~\cite{liu2017cost} is a newest supervised feature selection method considering the class imbalance problem of labeled data.
  \item WkNN~\cite{bugata2019weighted} is a newest supervised feature selection method based on k-nearest neighbors algorithm
\end{enumerate}
In addition, to validate the effectiveness of the proposed center-characteristic loss, we test a variant of our method (denoted as \emph{\myalgv}) by setting $\alpha {=} 0$ in Eq.~\ref{eq_final_cost_relax} to eliminate the effect of this loss term.
Another point to be noted is that we apply all methods on seen concepts and then evaluate the quality of selected features on unseen concepts.
In other words, in the training phase, the unseen concepts are invisible to all methods.

\begin{figure*}[!t]
\centering
\subfigure[SUN]{
    \includegraphics[width=0.345\textwidth]{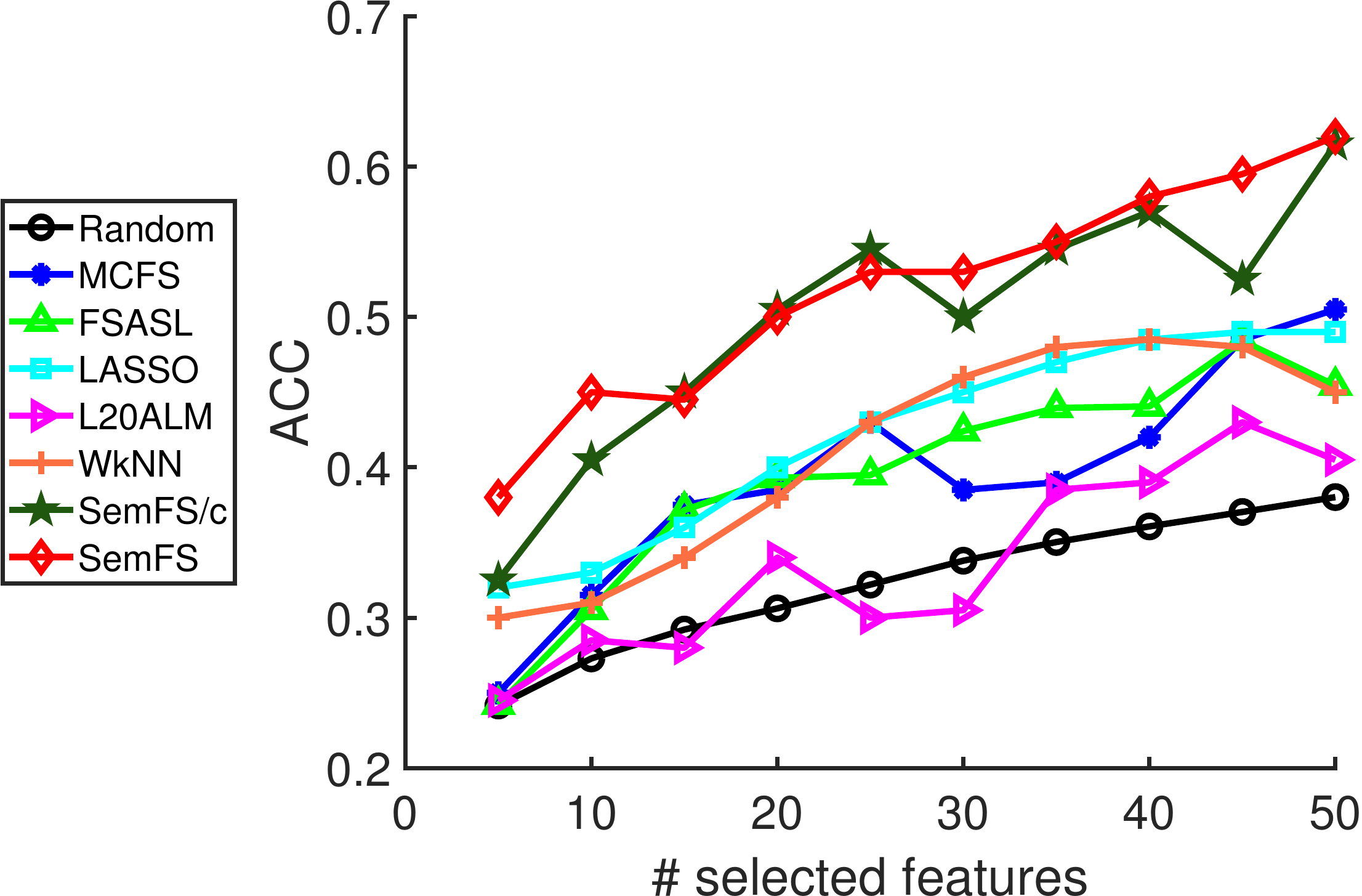}
}
\subfigure[aPY]{
    \includegraphics[width=0.285\textwidth]{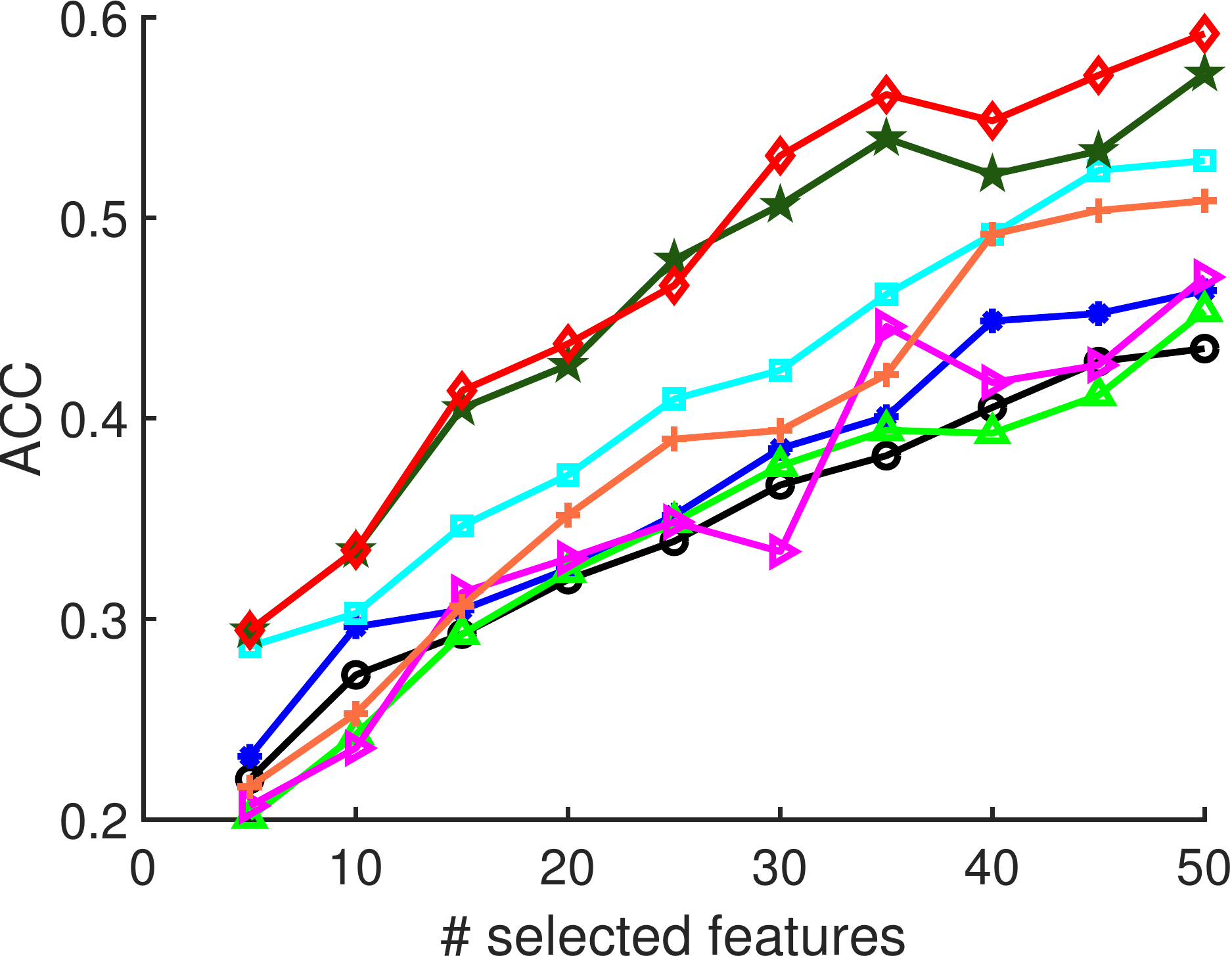}
}
\subfigure[CIFAR10]{
    \label{fig:ACC_CIFAR10}
    \includegraphics[width=0.285\textwidth]{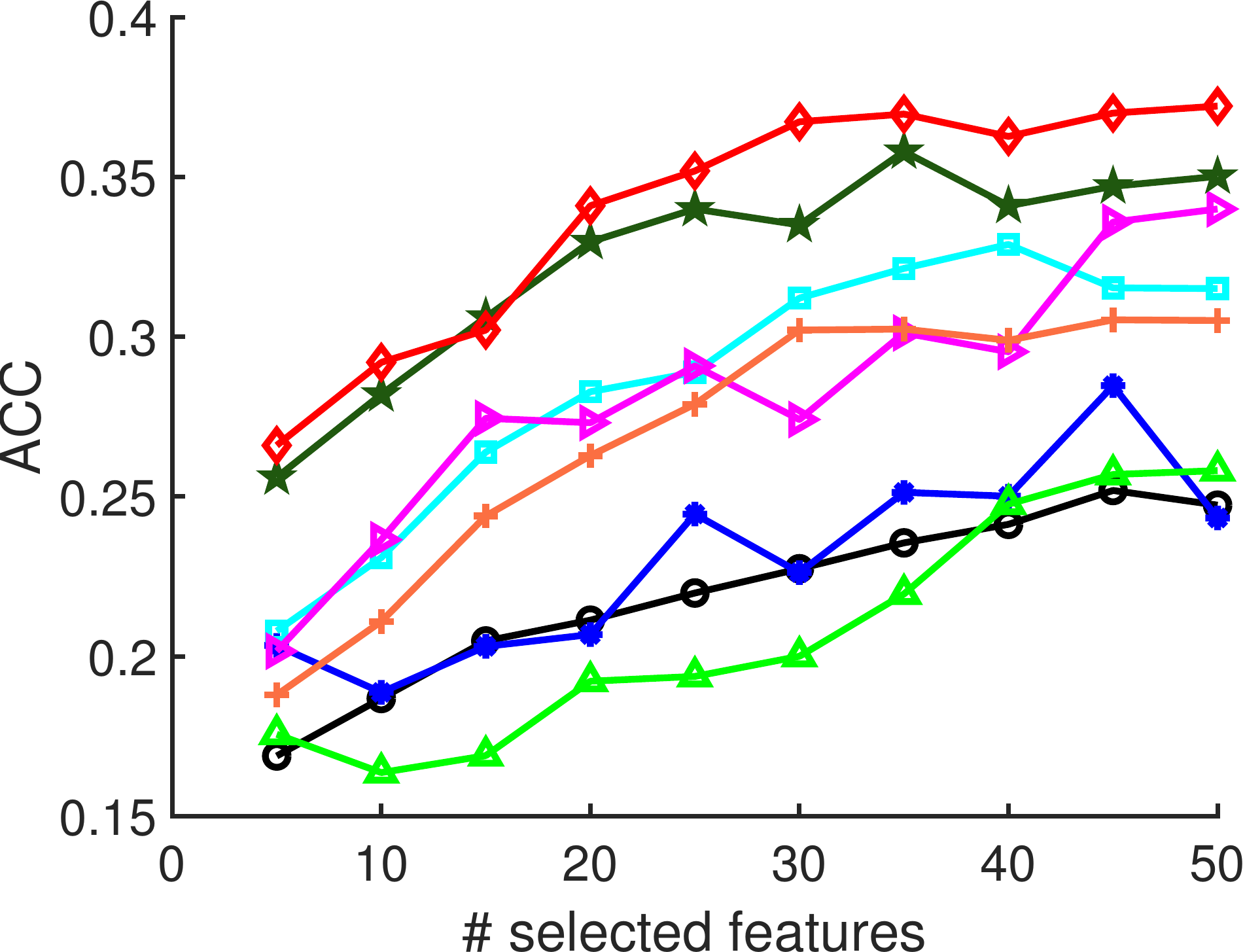}
}
\caption{Clustering results (ACC).}
\label{fig_ACC}
\end{figure*}

\begin{figure*}[t]
\centering
\subfigure[SUN]{
    \includegraphics[width=0.345\textwidth]{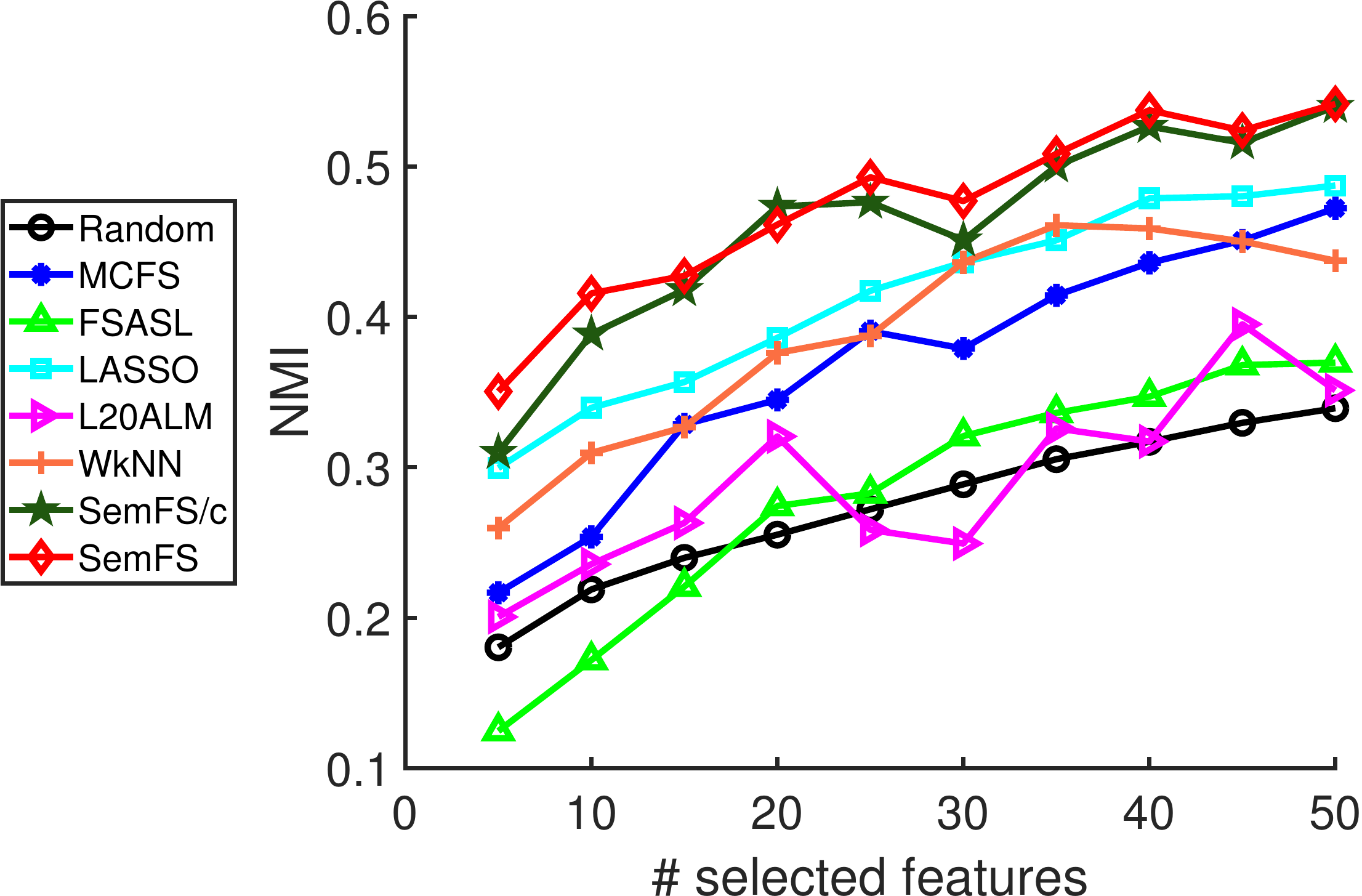}
}
\subfigure[aPY]{
    \includegraphics[width=0.285\textwidth]{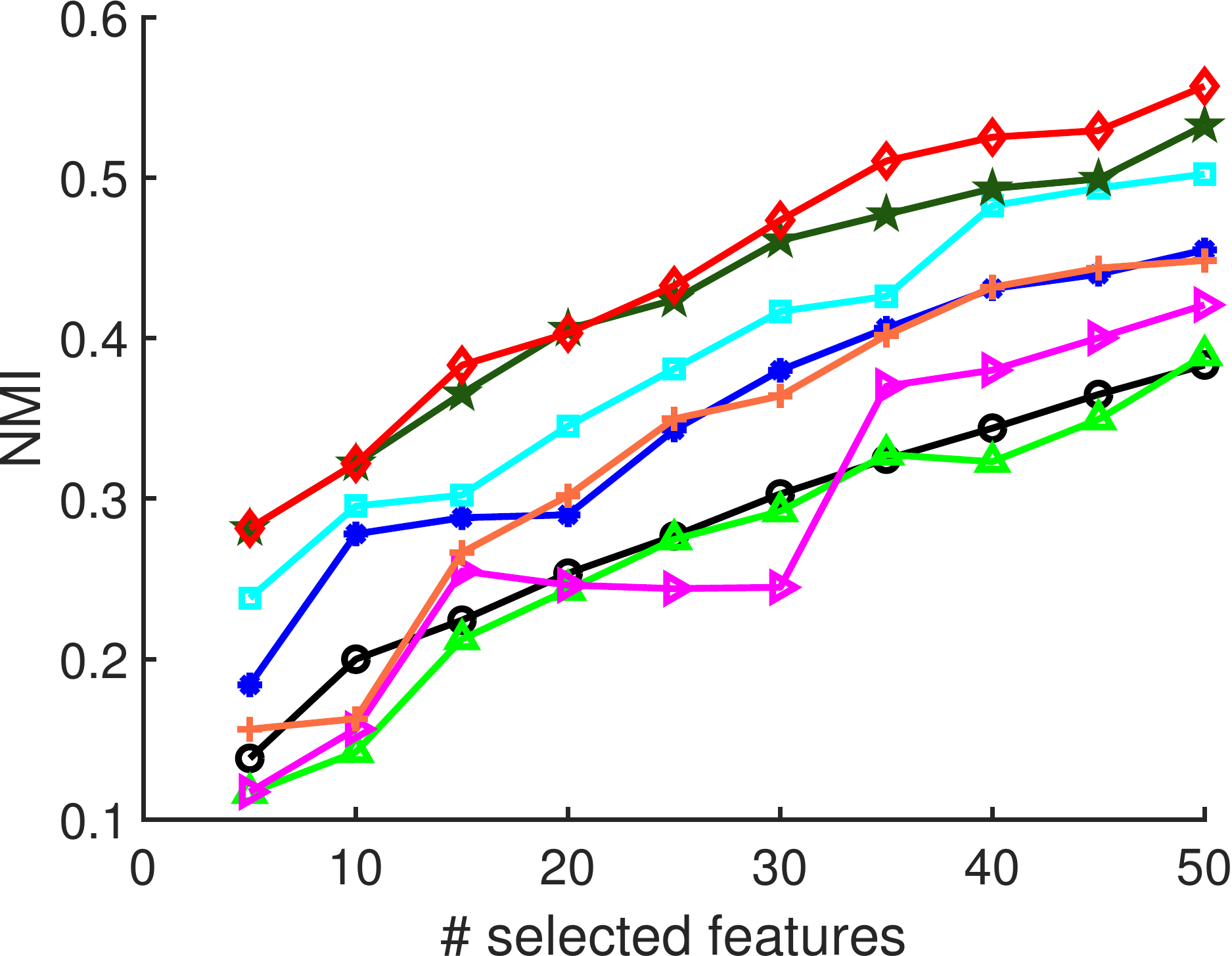}
}
\subfigure[CIFAR10]{
    \label{fig:NMI_CIFAR10}
    \includegraphics[width=0.285\textwidth]{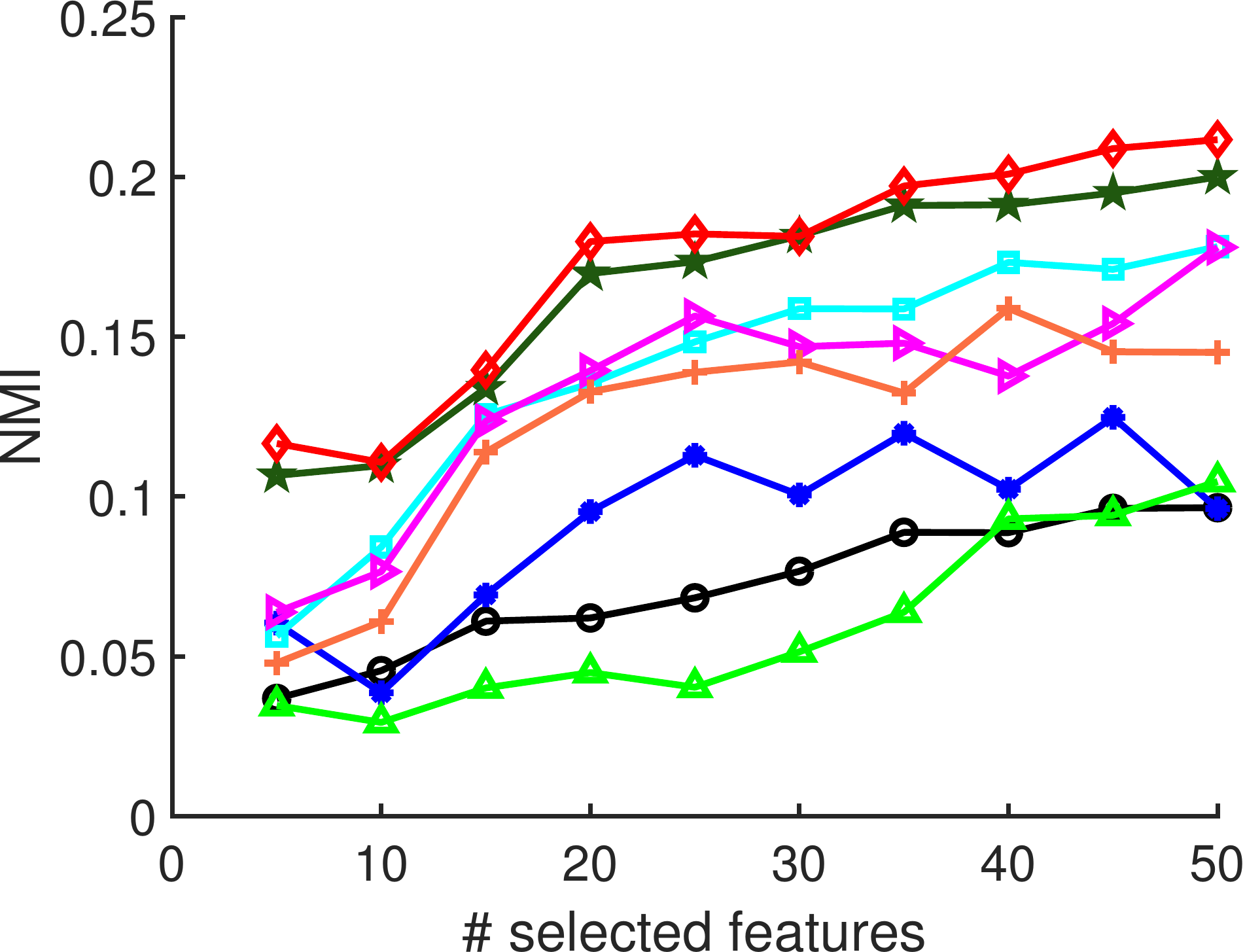}
}
\caption{Clustering results (NMI).}
\label{fig_NMI}
\end{figure*}
\subsubsection{Experimental Setting}
Since determining the optimal number of selected features is still an open problem~\cite{tang2012unsupervised}, we follow~\cite{du2015unsupervised} to vary the selected features number from 5 to 50 in intervals of 5.
%Similar to~\cite{du2015unsupervised}, the number of selected features varies from 5 to 50 in intervals of 5.
% ranges from 1 to 10 with incremental step 1
%With the selected features, we evaluate the performance in terms of \emph{K}-means clustering by two widely used metrics, i.e., Accuracy (ACC) and Normalized Mutual Information (NMI)~\cite{duda2012pattern}.
%As the unseen concepts have no labeled data, we follow the common way to evaluate unsupervised feature selection, i.e., evaluating the selected feature on clustering task.
%As the unseen concepts have no labeled data, we follow the common way to evaluate unsupervised feature selection on the clustering task~\cite{cai2010unsupervised}.
As the unseen concepts have no labeled data, we evaluate the selected features on the clustering task, so as to follow the current practice of evaluating unsupervised feature selection~\cite{cai2010unsupervised}.
%which is the most common way for unsupervised feature selection evaluation~\cite{cai2010unsupervised}.
%which is the common way for unsupervised feature selection evaluation.
%we follow the common way to evaluate unsupervised feature selection on the clustering task~\cite{cai2010unsupervised}.
%Following the common way to evaluate unsupervised feature selection, we evaluate the selected feature on clustering task.
%evaluate the quality of
%follow the common way to evaluate unsupervised feature selection
%we evaluate the quality of selected features is valuated in terms of clustering performance.
Specifically, we employ the classical \emph{K}-means\footnote{\url{www.cad.zju.edu.cn/home/dengcai/Data/code/litekmeans.m}} clustering method and adopt two widely used clustering metrics, i.e., Accuracy (ACC) and Normalized Mutual Information (NMI)\footnote{\url{www.cad.zju.edu.cn/home/dengcai/Data/Clustering.html}}~\cite{duda2012pattern}.
%The quality of selected features is valuated in terms of \emph{K}-means clustering performance by two widely used metrics, i.e., Accuracy (ACC) and Normalized Mutual Information (NMI)\footnote{www.cad.zju.edu.cn /home/dengcai/Data/Clustering.html}~\cite{duda2012pattern}.
%We evaluate the in terms of clustering performance.
%Two widely used clustering quality evaluation metrics, Accuracy (ACC) and Normalized Mutual Information (NMI)~\cite{duda2012pattern}, are used.
%We perform \emph{K}-means to cluster target images with the selected features.
Since \emph{K}-means is sensitive to initialization, we repeat the clustering 20 times with random initializations and report the average performance.
%To tune the parameters of all methods, we carry out 5-fold cross-validation on the seen classes, i.e., training on four folds of seen data and testing on the remaining single fold.
%Specifically, we create the validation set by grouping all instances belonging to 20\% of the seen classes chosen at random (without replacement).
%Specifically, we train different methods on four folds of the seen data, and test on the remaining single fold.
% chosen at random (without replacement).
%All parameters are tuned by grid search with $\{10^{-2},10^{-1},10^{0},10^{1},10^{2}\}$.
%from the grid by a “grid-search” strategy from
%Besides, to tune the parameters of all methods, we carry out cross-validation on the seen classes for all methods.
%%, and turn all parameters by a ``grid-search'' strategy from $\{10^{-2},10^{-1},10^{0},10^{1},10^{2}\}$.
%In the compared methods, all parameters are tuned by a ``grid-search'' strategy from $\{10^{-2},10^{-1},10^{0},10^{1},10^{2}\}$.
%In our methods, parameters are selected from the same ``grid'' via cross-validation on the seen concepts.
To fully show the limitations of these compared methods, we tune their parameters by a grid-search strategy from $\{10^{-2},10^{-1},10^{0},10^{1},10^{2}\}$ and report the best results.
%The parameters of all methods are tuned by a ``grid-search'' strategy from $\{10^{-2},10^{-1},10^{0},10^{1},10^{2}\}$ based on cross-validation on the seen concepts.
In contrast, to show the effectiveness of our method, we simply fix our parameters $\alpha{=}1$ and $\gamma{=}0.1$ throughout the experiment.
%The parameters of all methods are set by 5-fold cross validation using the seen data, i.e., training on four folds and testing on the remaining single fold.
%%Specifically, we train
%All parameters are tuned by grid search with $\{10^{-2},10^{-1},10^{0},10^{1},10^{2}\}$.

\subsection{Clustering with Selected Features}
Figure~\ref{fig_ACC} and Figure~\ref{fig_NMI} show the clustering performance in terms of ACC and NMI, respectively.
There are several important observations to be made.
\begin{itemize}
  \item Our methods (both \myalg\ and \myalgv) always outperform the others significantly, in terms of both ACC and NMI, on all datasets.
  For example, with 20 selected features, our two methods outperform the best baseline by 20–-40\% relatively in terms of ACC.
%The underlying principle is that our methods successfully transfer the supervised knowledge to unseen concepts by introducing attribute labels.
The underlying principle is that our methods successfully deduce the knowledge of unseen concepts from seen concepts by introducing attributes.
  \item An interesting observation is that the unsupervised method MCFS is comparable to or even better than those supervised baselines.
  %An interesting observation is that the unsupervised method MCFS gains competitive even better performance than the supervised method L20ALM.
  This indicates that the supervised methods may be misled by the seen concept labels which provide little information about unseen concepts.
      %Another interesting observation is t
  \item On the CIFAR10 dataset where most concepts are unseen and attributes are automatically generated, our methods could still identify high quality features (Figs.~\ref{fig:ACC_CIFAR10} and~\ref{fig:NMI_CIFAR10}).
This confirms the flexibility of our methods as well as the advantage of attributes.
  \item \myalg\ consistently obtains more stable and better performance than \myalgv.
This phenomenon indicates that the proposed center-characteristic loss can help to select reliable discriminative features for unseen concepts with limited labeled seen instances.
\end{itemize}

\begin{figure}[!t]
\centering
\subfigure{
    \includegraphics[width=0.45\textwidth]{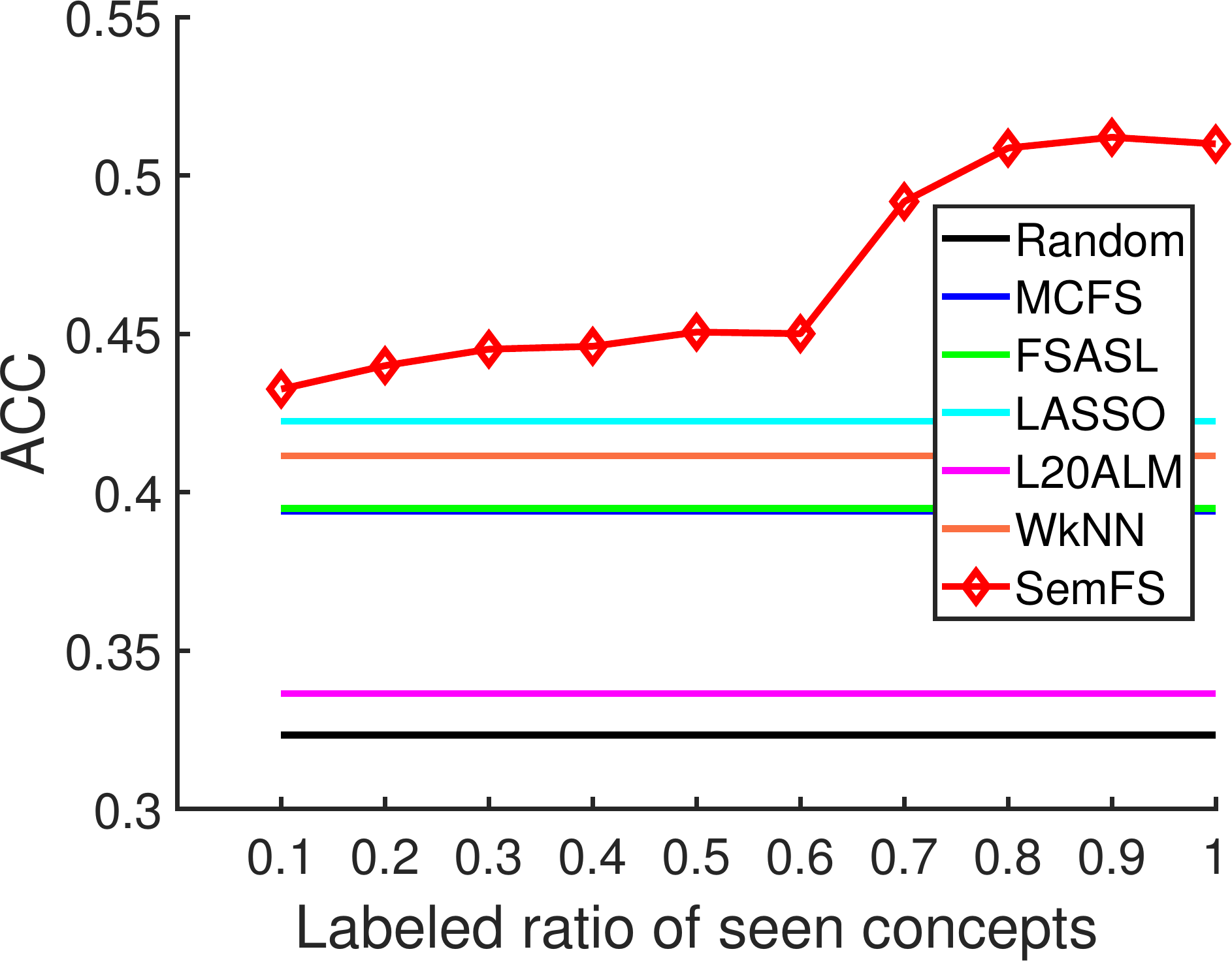}
}
\subfigure{
    \includegraphics[width=0.45\textwidth]{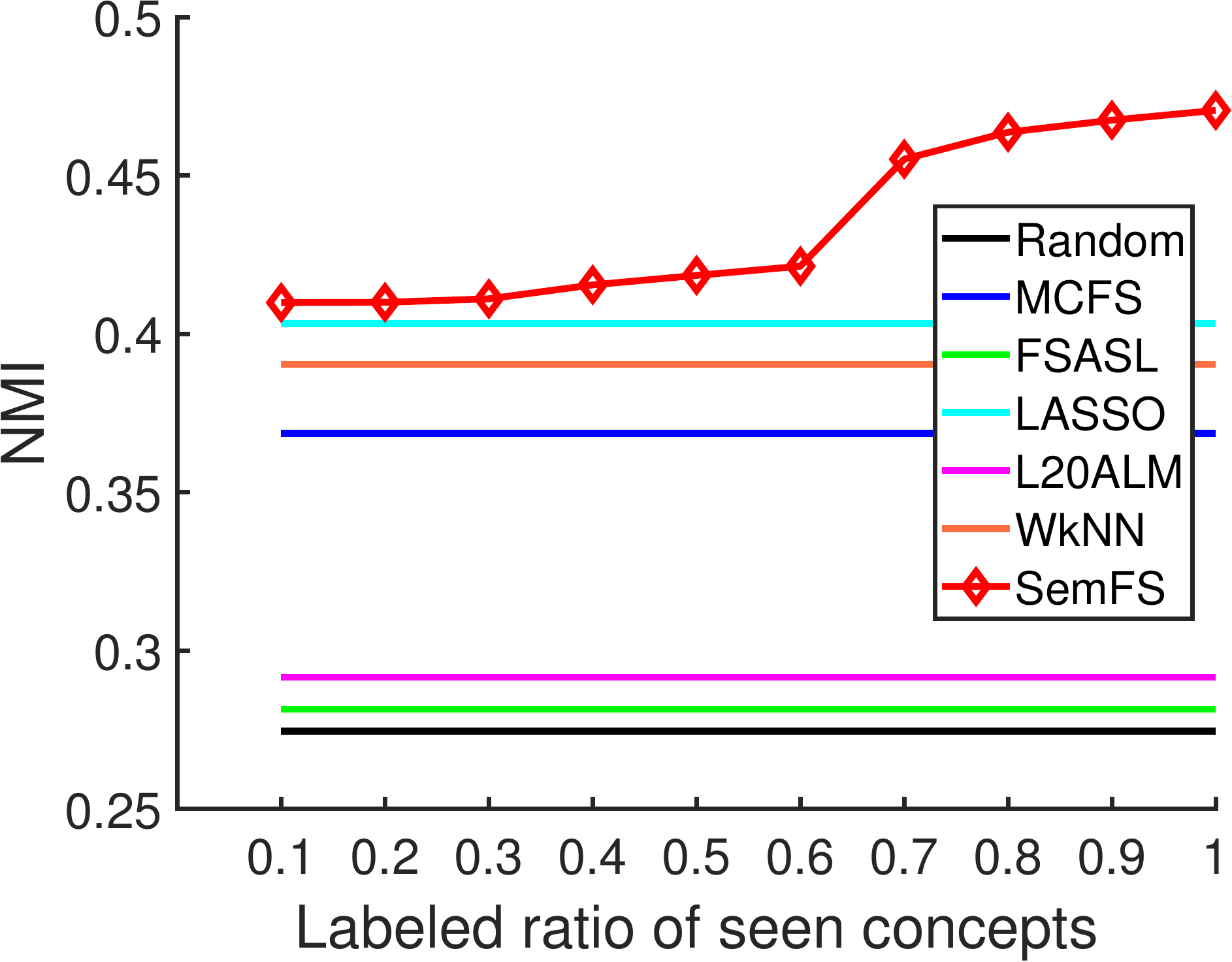}
}
\caption{Average clustering performance w.r.t. different labeled ratios of seen concepts on SUN.}
\label{fig_effect_ratio}
\end{figure}

\subsection{Effect of Labeled Seen Concept Ratio}
In this part, we evaluate the performance of the proposed method \myalg\ w.r.t. the labeled ratio in all seen concepts.
Specifically, in our method, we vary the labeled ratio of each seen concept from 0.1 to 1.
%keep the unseen class number unchanged and vary the labeled ratio of seen categories from 0.1 to 0.9.
%For simplicity, we experimentally set the parameters in our method as: $alpha=1$ and $\gamma=1$.
Figure~\ref{fig_effect_ratio} shows the average clustering performance on SUN (due to space limitations, we do not report the results on the other datasets, because we have similar observations on them).
It can be observed that even with only 10\% of seen concepts, \myalg\ could still benefit the success of cross-concept knowledge generalization.
This verifies the superiority of attributes for the ZSFS problem.
%This is consistent with the previous finding
On the other hand, we find that as labeled ratio of seen concepts increases, our method shows an increasing advantage over the previous studies.
We analyse that with more seen concepts, \myalg\ would deduce more reliable supervision for the unseen concepts, thereby simultaneously improving the quality of selected features.

%have higher possibility to find relevant supervision for the unseen class,
%We analyse that with more seen concepts, \myalg\ can learn more reliable knowledge about the relationship between features and attributes, thereby simultaneously improving the quality of selected features.

%\subsection{Effectiveness of Semantic Attribute Labels}
\subsection{Advantage of Attributes}
%\subsubsection*{Advantage}
%To further show the advantage of attribute labels, we use three methods: \myalg, \myalgv\ and the comparison method LASSO.
To further show the advantage of attributes, we use three methods: \myalg, \myalgv\ and the compared classical feature selection method LASSO.
%To further show the advantage of attribute labels, we use our methods (ZSFS and ZSFS$_{Eq.1}$) and the comparison method LASSO.
%We use three methods, ZSFS, ZSFS$_{Eq.1}$ and the compared method LASSO, for this purpose.
%We compare the effects of embedded labels and binary labels on three methods: ZSFS, ZSFS$_{Eq.1}$ and the compared method LASSO.
For clarity, we use $\mathcal{M}_{o}$ and $\mathcal{M}_{s}$ to denote the method $\mathcal{M}$ using the original class labels (i.e., $Y$) and semantic attributes (i.e., $Y_{s}$), respectively.
Figure~\ref{fig_semantic_compare} shows the average clustering results in terms of ACC.
It can be clearly observed that: compared to the original class labels, attributes could significantly improve the generalization ability of these feature selection methods.
%the performance of attribute labels is obviously better than that of class labels.
%As we can seen the performance of attribute labels is obviously better than that of class labels.
%In addition, even though only replacing binary labels with semantic labels for LASSO, we can obtain more than 10\% performance improvement.
The underlying reason is that attributes contain discriminative information about unseen concepts.
In contrast, the original class labels neglect this kind of knowledge, thereby leading to poor performance.
%Another interesting observation is that when switching to semantic labels, LASSO immediately obtains .

%\subsection{Limitation of Attribute Labels}
%%\subsubsection*{Limitation}
%%As pointed out, the success of our method relies on the used attributes which provide discriminative descriptions of unseen classes.
%Our method would fail in the case where the provided attributes cannot capture the discriminative properties shared between both seen and unseen classes.
%%seen classes and unseen classes have little attributes in common.
%This mainly occurs when the seen part and unseen part come from two different application scenarios, e.g., the scene dataset SUN and object dataset aPY are used as the seen and unseen parts respectively (we omit to report this result due to space limitations).
%%For example, the scene dataset SUN  and animal dataset aPY are adopted as the seen and unseen part respectively (we omit to report this result due to space limitations).
%%We omit this result due to space limitations.
%%How to select a reasonable seen class set for the coming unseen object may be a
%Intuitively, this problem can be solved by choosing appropriate seen classes for the specific application scenario, which we leave as our further work.

\begin{figure}[!t]
\centering
    \includegraphics[width=0.6\textwidth]{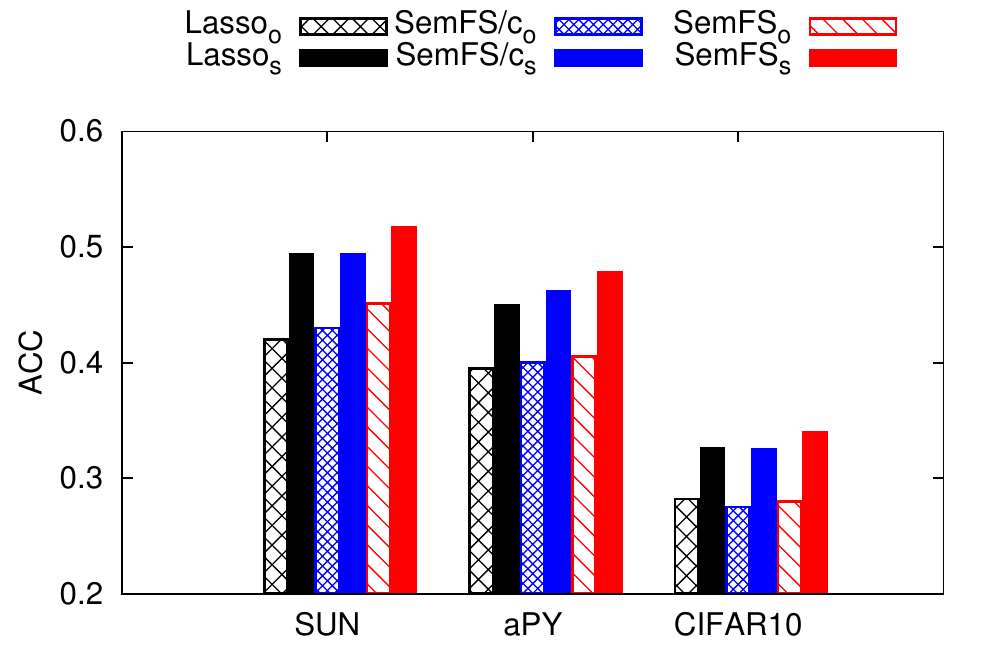}
\caption{Average ACC: class labels vs. attributes.}
\label{fig_semantic_compare}
\end{figure}

\begin{figure}[t]
\centering
\subfigure{
    \includegraphics[width=0.45\textwidth]{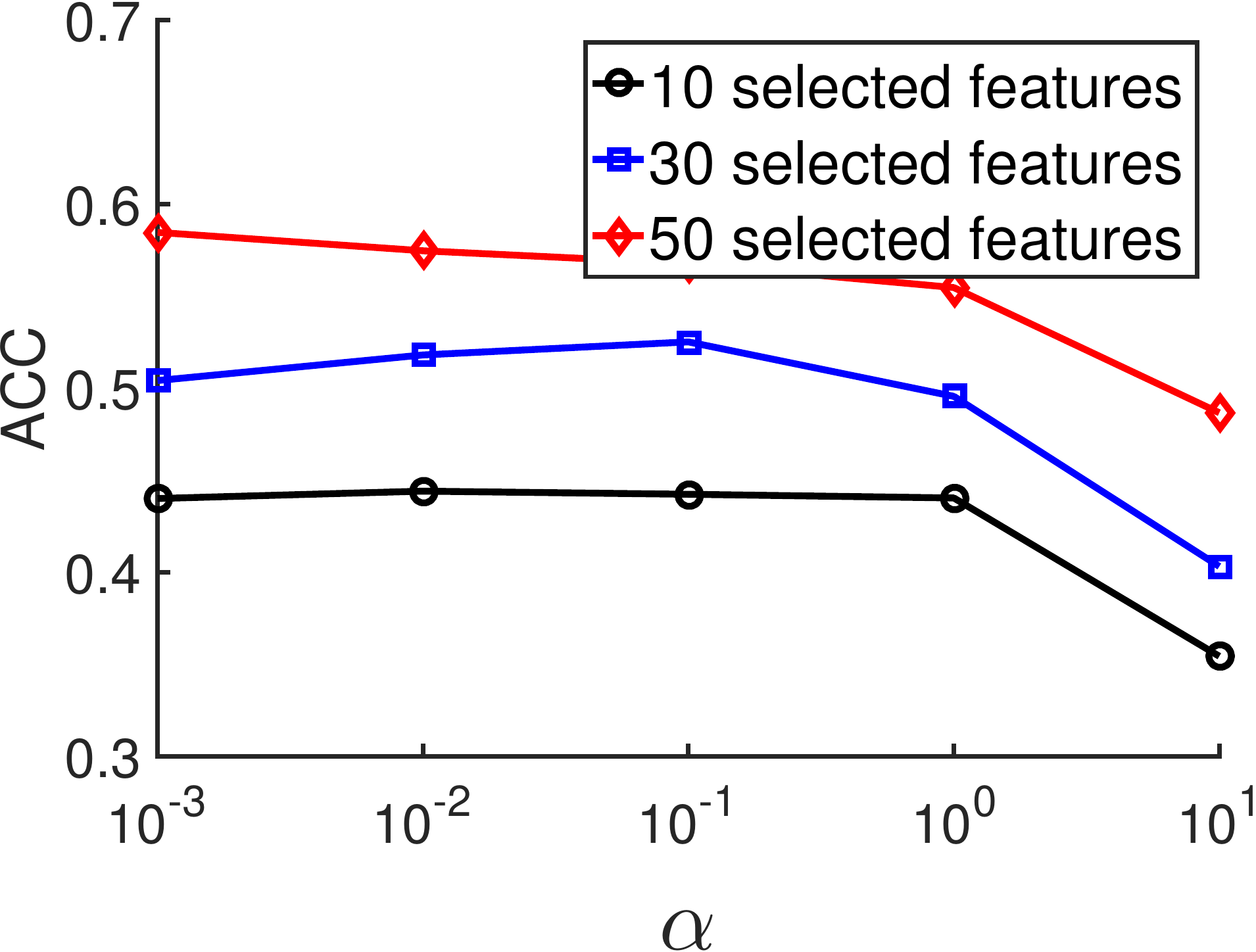}
}
\subfigure{
    \includegraphics[width=0.45\textwidth]{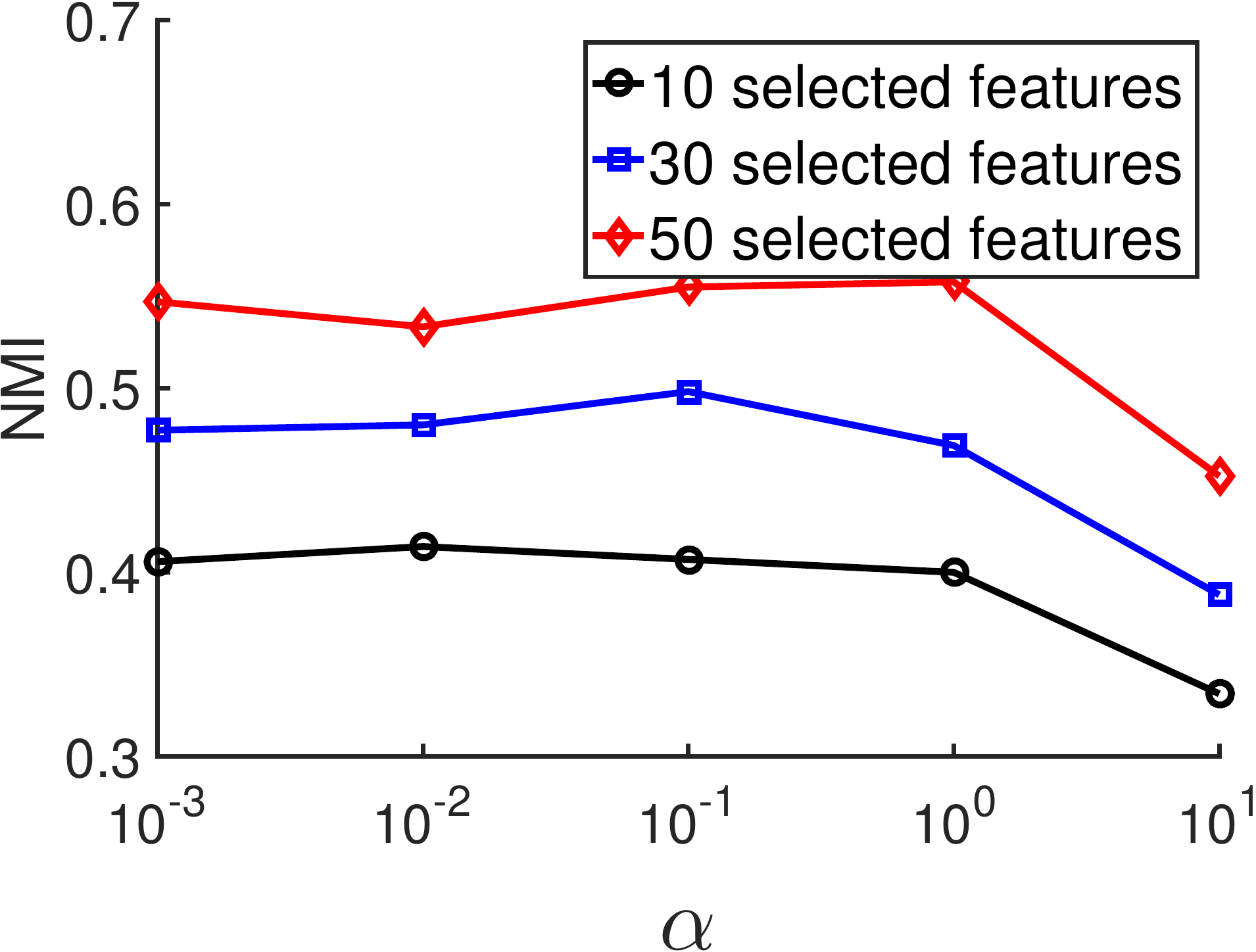}
}
\caption{The effect of parameter $\alpha$ on SUN.}
\label{fig_para}
\end{figure}

\subsection{Parameter Sensitivity}
In the proposed method \myalg, there is an important parameter $\alpha$ which controls the penalty of center loss.
% the contributions of xxx and xxx.
%controlling the contribution from label information.
%Figure~\ref{fig_para} shows the experimental results via changing the value of $\alpha$ (with the regularization parameter $\lambda{=}0.1$).
Figure~\ref{fig_para} shows the clustering performance w.r.t. this parameter on SUN (with the fixed regularization parameter $\lambda{=}1$).
It can be observed that our method is not very sensitive to $\alpha$, and the results remain almost the same when the parameters vary among $\{10^{-3}, ..., 10^{0} \}$ in our experiments.
%This empirically proves our convergence theory.

%%\input{result_clustering}
\section{Conclusion} \label{section_conclusion}
%This paper investigates a challenging feature selection problem, where the labeled source classes and unlabeled target classes are related but totally different.
This paper investigates the problem of Zero-Shot Feature Selection, i.e., building a feature selection model that generalizes well to unseen concepts with limited training data of seen concepts.
%selecting representative features for the unseen concepts, with the limited training examples of seen concepts.
%It further shows that this kind of cross-class knowledge could be transferred from the source domain to improve the target domain feature selection.
% the quality of selected features in the target domain.
%a appropriate transferring knowledge from the related source domain could significantly improve the quality of selected features in the target domain.
%Specifically, we propose a novel feature selection method, named Cross Class Feature Selection (CCFS).
We propose a novel feature selection method named \myalg.
The basic idea of our method is to guide feature selection by attributes which can be seen as a bridge for transferring supervised knowledge from seen concepts to unseen concepts.
%Particularly, to transfer cross-class knowledge for feature selection, \myalg\ utilizes attribute labels, which encode knowledge shared between both seen and unseen classes.
%It utilizes attribute labels, which encode knowledge shared between both seen and unseen classes, to guide feature selection.
% which transfers supervised knowledge from seen classes to unseen classes.
%To transfer supervised knowledge from seen classes to unseen classes, the proposed method \myalg\ adopts the attribute labels of seen classes to guide feature selection.
%In addition, we propose a new loss function (i.e., center-characteristic loss) to enhance the quality of selected features.
In addition, we propose the center-characteristic loss to enhance the quality of selected features.
%  which encourages the selected features to capture the central characteristics of seen classes.
%In addition, a joint learning framework is proposed to consider these two domains together.
%In addition, to xxxxxxxxxx, our framework considers the task of knowledge transfer and feature selection jointly.
% attributes as a bridge generate pseudo labels to guide the feature selection in target domain.
%jointly considers the task of cross-class knowledge transfer and feature selection.
%We propose a novel feature selection method named CCFS.
%Our method utilizes attributes as a bridge generate pseudo labels to guide the feature selection in target domain.
%Besides, a joint learning framework is proposed to consider both cross-class knowledge transfer and target domain feature selection.
%Our method not only transfers the cross-class knowledge from the related source domain, but also pre-serves the manifold structure of target data.
%We further propose an effective algorithm for its optimization issue.
%Our method not only transfers knowledge from the related source domain, but also preserves the manifold structure of target data.
%considers the knowledge transferred from source domain, but also preserves manifold smoothness in the target domain.
%Extensive experiments show that appropriate transferring knowledge from the related source domain could significantly improve the quality of selected features in the target domain, which also demonstrates the advantages of our approach.
%Extensive experiments demonstrate the advantages of our approach.
Finally, we formulate these two components into a joint learning model and give an efficient solution.
Extensive experiments conducted on several real-world datasets demonstrate the effectiveness of our method.
%In the future, we plan to extend our method to make it suitable to utilize the unlabel data in the seen concepts.
In the future, we plan to extend our method to use the additional unlabeled data of seen concepts.

% dynamic features of the streaming trajectories.

%\section*{Acknowledgments}
%This work is supported in part by the National Natural Science Foundation of China (No.~61373023, No.~61170064), and NSF through grants III-1526499.
%NSF through grants CNS-1115234, and OISE-1129076.
%This work is supported in part by NSF through grants III-1526499. 

\section*{Acknowledgment}

\noindent
This work is supported in part by the Fundamental Research Funds for the Central Universities (Grant No.~FRF-TP-18-016A1) and National Natural Science Foundation of China (No.~61872207).
\vskip 2mm

%% The Appendices part is started with the command \appendix;
%% appendix sections are then done as normal sections
%% \appendix

%% \section{}
%% \label{}

%% If you have bibdatabase file and want bibtex to generate the
%% bibitems, please use
%%

%%  \bibliographystyle{elsarticle-num}
%%  \bibliography{<your bibdatabase>}

\bibliographystyle{elsarticle-harv}
\bibliography{simple}

%%% else use the following coding to input the bibitems directly in the
%%% TeX file.
%
%\begin{thebibliography}{00}
%
%%% \bibitem{label}
%%% Text of bibliographic item
%
%\bibitem{}
%
%\end{thebibliography}

%\par\noindent
%\parbox[t]{\linewidth}{
%\noindent {\bf Zheng Wang}\
%received his Ph.D. degree from School of Software, Tsinghua University, Beijing, in 2018.
%Currently, he is an assistant professor at Department of Computer Science and Technology, University of Science and Technology Beijing, China.
%He has published research papers on several top journals and conferences, such as IEEE TKDE, IJCAI and AAAI.
%His research interests include social network analysis, data management and artificial intelligence.}
%\vspace{4\baselineskip}
%
%
%\par\noindent
%\parbox[t]{\linewidth}{
%\noindent {\bf Yuexin Wu}\
%received his B.S. degree from School of Software, Tsinghua University, Beijing, in 2017.
%Currently, he is working toward the M.Sc. degree at the same college.
%His research interests include social network analysis and data mining.}
%\vspace{4\baselineskip}
%
%\par\noindent
%\parbox[t]{\linewidth}{
%\noindent {\bf Xiaojun Ye}\
%received the B.S. degree in mechanical engineering from Northwest Polytechnical University, Xi’an, China, in 1987 and the Ph.D. degree in information engineering from INSA Lyon, France, in 1994. Currently, he is a professor at School of Software, Tsinghua University, Beijing, China. His research interests include cloud data management, data security and privacy, and database system testing.}
%\vspace{4\baselineskip}

\end{document}